%% file: paper.tex
\documentclass[]{bytedance_seed}

\usepackage[toc,page,header]{appendix}

\usepackage{minitoc}

\usepackage[utf8]{inputenc}
\usepackage{url}
\usepackage{nicefrac}
\usepackage{amsthm}
\usepackage{array}
\usepackage{algorithm}
\usepackage{algpseudocode}
\usepackage{float}
\usepackage{enumitem}
\usepackage{wrapfig}
\usepackage{tikz}
\usetikzlibrary{arrows.meta,positioning}

\usepackage{tabularx}
\usepackage{makecell}
\usepackage{amsmath}
\usepackage{amssymb}

\newcolumntype{Y}{>{\raggedright\arraybackslash}X}
\newcolumntype{L}[1]{>{\raggedright\arraybackslash}p{#1}}

\algrenewcommand\algorithmicrequire{\textbf{Input:}}
\algrenewcommand\algorithmicensure{\textbf{Output:}}

\input{math_commands}

\definecolor{proofbg}{HTML}{F7F9FC}
\definecolor{proofframe}{HTML}{D8E2F0}

\newcommand{\code}[1]{{\ttfamily\upshape #1}}
\newtheorem{proposition}{Proposition}
\newtheorem{corollary}{Corollary}
\newtheorem{remark}{Remark}
\tcolorboxenvironment{proof}{
  enhanced,
  breakable,
  colback=proofbg,
  colframe=proofframe,
  boxrule=0.35pt,
  arc=1pt,
  left=6pt,
  right=6pt,
  top=4pt,
  bottom=4pt,
  before skip=6pt,
  after skip=6pt
}

\newtcolorbox{deepmemorybox}{
  enhanced,
  breakable,
  colback=seedblue!4,
  colframe=seedblue!4,
  boxrule=0.35pt,
  arc=1pt,
  left=6pt,
  right=6pt,
  top=5pt,
  bottom=5pt,
  before skip=6pt,
  after skip=6pt
}

\title{Modular TTT: Rethinking Test-Time Training as Composable Modules}

\author[1,2,3]{Bohao Tang}
\author[3,\ddagger]{Zhen Qin}
\author[3]{Yuqi Pan}
\author[3]{Zheng Li}
\author[1,2,\dagger]{Pengfei Liu}
\author[1,\dagger]{Ya Zhang}

\affiliation[1]{Shanghai Jiao Tong University}
\affiliation[2]{Shanghai Innovation Institute}
\affiliation[3]{ByteDance Seed}

\contribution[\ddagger]{Project lead}
\contribution[\dagger]{Corresponding authors}

\abstract{
Test-time training (TTT) views sequence modeling as an online learning problem in which fast weights are updated by an internal learning rule. Despite the growing number of TTT variants, existing approaches typically hard-code each variant separately, which makes it difficult to design new TTT methods and to isolate the role of each component. To address this, we propose Modular TTT, a framework that represents the inner learner as a directed acyclic graph and exposes the fast-weight network, loss function, learning rate, weight decay, and normalization as explicit design dimensions. Modular TTT automatically composes primitive-level train-view forward, train-view backward, and causal query-view rules into the full graph-level TTT computation, including the fast-weight state transition. Using Modular TTT, we systematically ablate the components of TTT and find that small learning-rate initialization, weight decay, and a single-layer nonlinearity improve performance, while MSE and inner-product losses perform similarly. Deeper fast-weight networks and normalization tend to hurt performance because they induce excessively large activations, while residual connections and gating provide little measurable benefit. Guided by these findings, we train the best resulting variant as 410M- and 1.45B-parameter models on 100B tokens, and observe training loss and benchmark performance comparable to Gated DeltaNet.
}

\date{\today}

\checkdata[Code]{\url{https://github.com/ByteDance-Seed/Modular-TTT}}

\begin{document}
\maketitle


\input{sections/introduction}
\input{sections/relatedwork}

\input{sections/approach}
\input{sections/experiments}

\clearpage

\bibliographystyle{plainnat}
\bibliography{main}

\clearpage

\beginappendix
\setcounter{equation}{0}
\renewcommand{\theequation}{A.\arabic{equation}}

\input{sections/appendix}

\end{document}

%% file: math_commands.tex
\usepackage{amsmath,amsfonts,bm}

\def\eqref#1{equation~\ref{#1}}

\def\1{\bm{1}}

\def\rmA{{\mathbf{A}}}
\def\rmB{{\mathbf{B}}}

\def\rmD{{\mathbf{D}}}

\def\rmH{{\mathbf{H}}}
\def\rmI{{\mathbf{I}}}

\def\rmK{{\mathbf{K}}}

\def\rmR{{\mathbf{R}}}

\def\rmU{{\mathbf{U}}}
\def\rmV{{\mathbf{V}}}

\def\vd{{\bm{d}}}

\def\vk{{\bm{k}}}

\def\vv{{\bm{v}}}

\def\mW{{\bm{W}}}

\DeclareMathAlphabet{\mathsfit}{\encodingdefault}{\sfdefault}{m}{sl}
\SetMathAlphabet{\mathsfit}{bold}{\encodingdefault}{\sfdefault}{bx}{n}



%% file: sections/introduction.tex

\section{Introduction}
Sequence modeling is a fundamental problem across many domains, and a broad range of architectures have been developed for it, including attention \citep{Vaswani+2017}, recurrent neural networks (RNNs)~\citep{Hochreiter1997LSTM,Chung2014GRU}, structured state space models (SSMs) \citep{Gu2022S4,GuDao2024Mamba}, and linear attention \citep{Katharopoulos2020LinearAttention}. Test-time training (TTT) \citep{Sun2024TTT} offers a different perspective by casting sequence modeling as an online learning process, in which fast weights \citep{Schmidhuber1992FastWeights,Ba2016FastWeights} are updated by an inner learning rule as the model processes a sequence. In contrast to methods that restrict state updates to fixed recurrences or explicit caches, TTT treats state update as inner learning, expanding the design space of sequence models.

Despite the growing number of TTT variants, the design space of TTT remains poorly understood. Existing approaches typically hard-code each variant separately, which creates two practical difficulties. First, it makes it difficult to develop new TTT methods in a systematic way, since modifying a variant often changes several components at once. Second, it obscures the role of individual components. In practice, the fast-weight network, loss function, learning rate, weight decay, and normalization are often changed together, making it difficult to determine which design choices are responsible for the observed behavior. As a result, while many TTT variants have been proposed \citep{Sun2024TTT,Zhang2025TTTRight,Behrouz2025Titans,Tandon2025E2ETTT}, their common structure and the contribution of each module remain unclear.

In this work, we present Modular TTT, a framework that represents the inner learner as a directed acyclic graph and factorizes TTT into explicit, controllable design dimensions. In Modular TTT, the fast-weight network, loss function, learning rate, weight decay, and normalization are treated as modular components that can be specified and varied independently. Given a learner DAG built from registered primitives, Modular TTT automatically composes their train-view forward, train-view backward, and causal query-view rules into the full graph-level TTT computation, including the fast-weight state transition. Automatic differentiation provides the local backward signals of the train-view loss; the causal query readout and fast-weight state transition are defined by the registered primitive rules. This removes the need to hand-derive a new global update rule for every TTT variant, and turns TTT design from a collection of custom implementations into a modular space that can be explored and analyzed systematically. Its optimized implementation also achieves multi-fold training-throughput improvements over the official implementations of the same TTT topologies. {Figure~\ref{fig:teaser} gives an overview of the modular formulation, speedup, and ablation results.}

\begin{figure*}[t]
\centering
\includegraphics[width=0.9\linewidth]{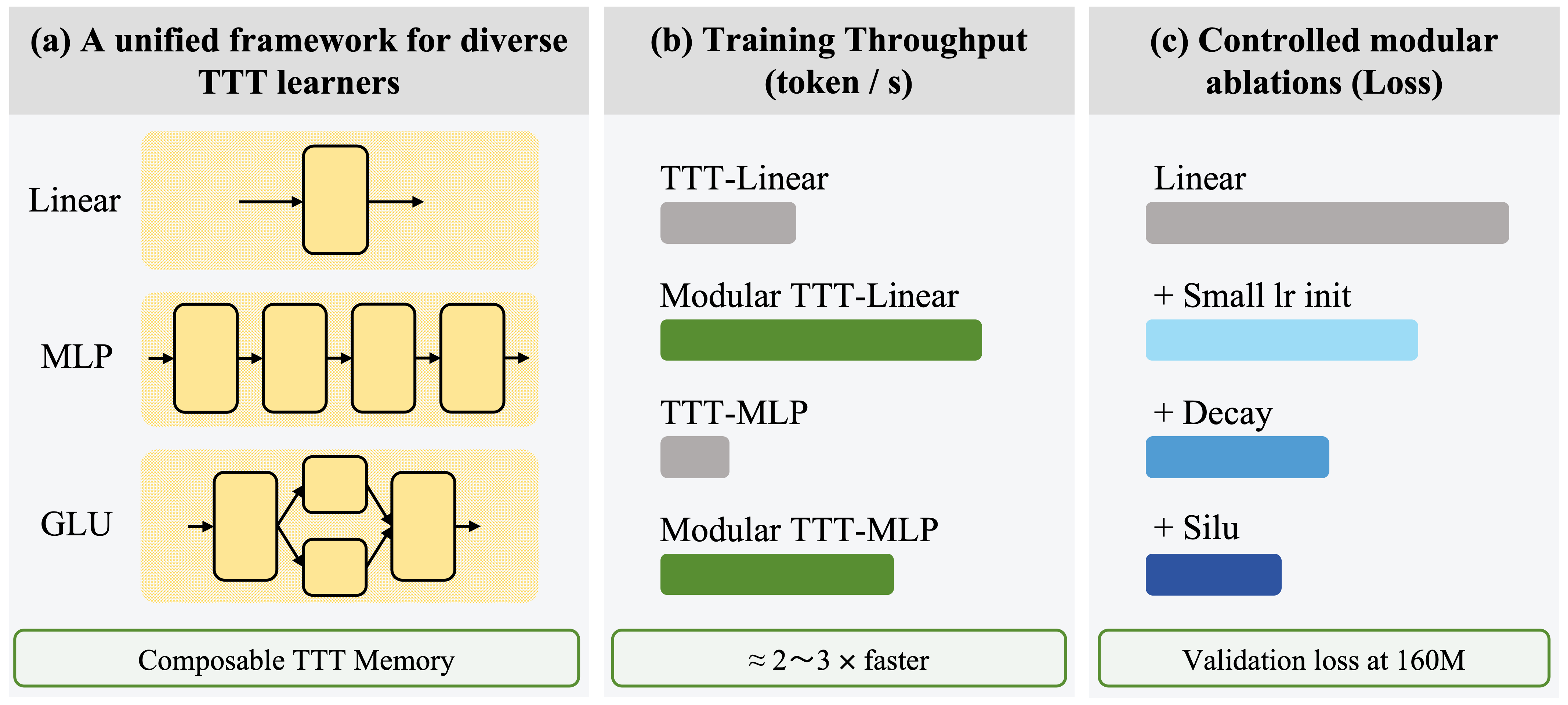}
\caption{\textbf{(a)} Linear, MLP, and gated learners {viewed as composable TTT memory forms}. \textbf{(b)} Training throughput vs.\ official TTT at 160M scale. \textbf{(c)} Ablations over key design choices: small lr init, decay, and SiLU each reduce validation loss.}
\label{fig:teaser}
\end{figure*}

\begin{figure}[t]
\centering
\includegraphics[width=1.0\linewidth]{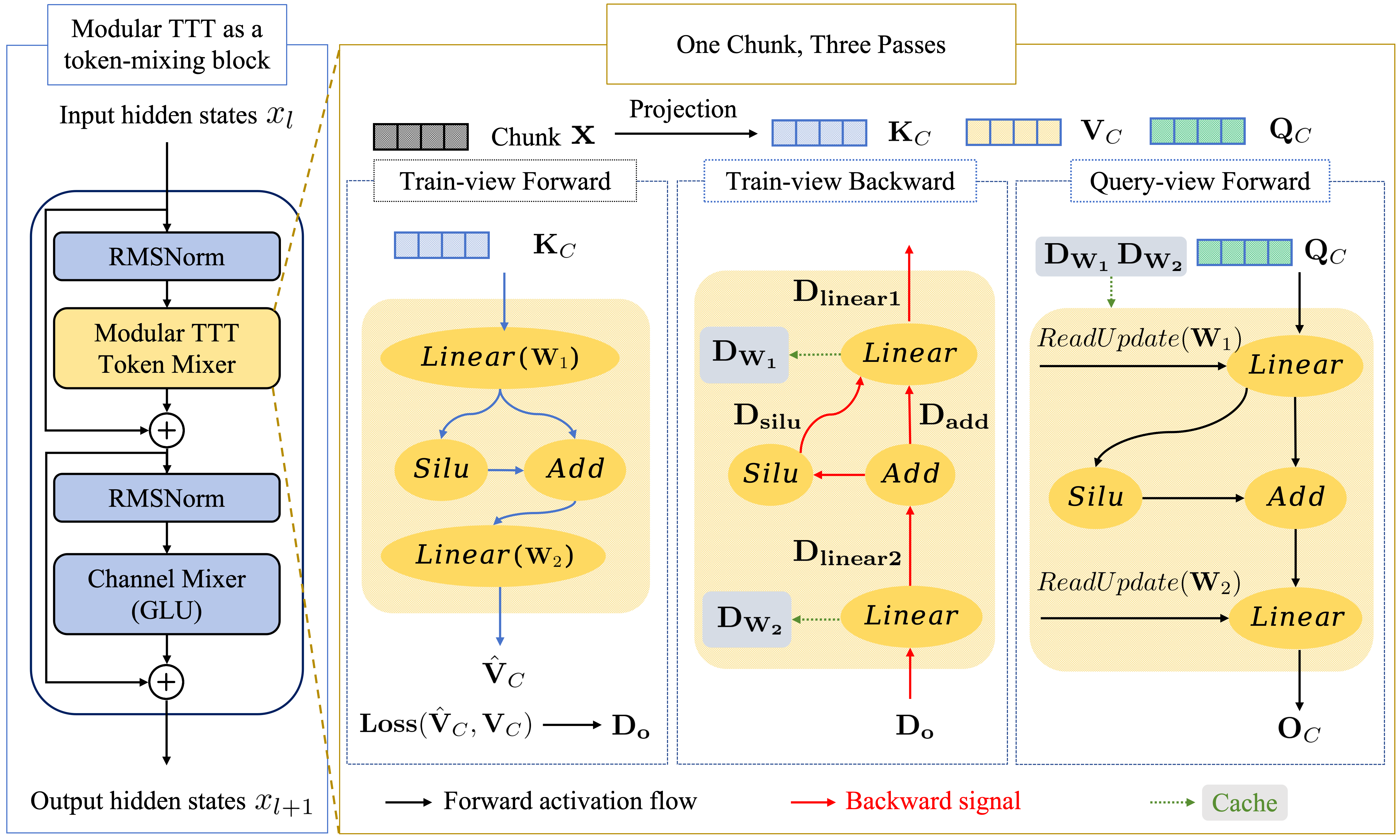}
\caption{Modular TTT: graph-structured memory with a shared {three-pass computation}. \textbf{Left:} the Modular TTT layer within a standard pre-norm backbone. \textbf{Right:} one chunk processed by train-view forward, train-view backward, and query-view forward over the graph memory. }
\label{fig:flash_ttt_overview}
\end{figure}

Building on Modular TTT, we conduct systematic ablations over the key components of TTT and obtain a clearer empirical picture of its design space. We find that small learning-rate initialization, weight decay, and a single-layer nonlinearity provide consistent gains, while MSE and inner-product losses perform similarly. Deeper fast-weight networks and normalization tend to degrade performance, likely due to excessively large activations, and residual connections and gating yield little measurable benefit. Guided by these findings, we train the selected variant at 410M/1.45B scale for 100B tokens and observe loss and benchmark performance comparable to Gated DeltaNet (GDN) \citep{Yang2025GDN}.

To summarize, our contributions are threefold. First, we introduce Modular TTT, a unified framework that represents the TTT inner learner as a directed acyclic graph and treats the fast-weight network, loss function, learning rate, weight decay, and normalization as modular design dimensions. Second, we use this framework to systematically analyze the role of major TTT components, identifying which design choices provide consistent gains and which offer little benefit under the evaluated settings. Third, guided by this analysis, we train the best resulting variant at 410M and 1.45B scale, and show that it achieves training loss and benchmark performance comparable to GDN.

%% file: sections/relatedwork.tex

\section{Related Work}

Existing work on efficient sequence modeling can be broadly grouped into four families: recurrent models, linear attention, state space models, and test-time training. All of these approaches aim to preserve long-context modeling ability while reducing the computation and memory costs that grow with sequence length, although they differ substantially in their implementations.

\textbf{RNN} Recurrent models compress historical context into a fixed-size hidden state, and enhance expressiveness through mechanisms such as gating and nonlinear transformations \citep{Hochreiter1997LSTM,Chung2014GRU}. The core idea is to accumulate contextual information through explicit state transition rules in a constant-size state, so that the cost of each inference step is decoupled from the history length. Recent work shows that removing nonlinearities from RNNs can improve efficiency while still achieving competitive performance \citep{Martin2018ParallelLRNN,Orvieto2023LRU,Qin2023HGRN,Peng2023RWKV}.

\textbf{Linear Attention} Linear attention rewrites attention as inner products of feature maps and exploits the associativity of matrix multiplication to aggregate key-value pairs first, thereby avoiding the explicit construction of a quadratic attention matrix \citep{Katharopoulos2020LinearAttention}. In the causal setting, reformulating token-wise recurrences into chunkwise recurrent computation \citep{Yang2024GLA} enables more efficient GPU utilization while preserving linear complexity with respect to sequence length. Subsequent work further extends this direction along several axes, including feature maps \citep{Choromanski2021Performer}, decay mechanisms \citep{Sun2023RetNet,qin2024transnormerllm,Yang2024GLA,Qin2024HGRN2,Yang2025DeltaNet,Yang2025GDN,Peng2025RWKV7} and normalization schemes.

\textbf{State Space Models} State space models start from continuous- or discrete-time state equations and model long-range dependencies through structured state transitions. Representative works such as S4 \citep{Gu2022S4} show that, with specialized initialization and structured parameterization, state space models can achieve strong empirical performance. More recent methods \citep{GuDao2024Mamba,DaoGu2024SSM} further emphasize content-dependent decay and hardware-friendly parallel scan algorithms, making this line a major direction for efficient sequence modeling.

\textbf{Test-Time Training} Unlike the approaches above, test-time training \citep{Krause2018Dynamic,Sun2024TTT} formulates sequence modeling as an online learning process, in which the hidden state is no longer a fixed-form vector state, but the fast weights \citep{Schmidhuber1992FastWeights,Ba2016FastWeights,Schlag2021FWP,Irie2021RecurrentFWP,Irie2022Dual} of a learnable model, and the corresponding state update is given by one or more optimization steps on a self-supervised objective. With suitable choices of the online learning loss and other components, the TTT perspective can instantiate a variety of Linear Attention, Linear RNN, and SSM variants. Recent work from this perspective has introduced instances such as TTT-Linear and TTT-MLP \citep{Sun2024TTT}, LaCT \citep{Zhang2025TTTRight}, and Titans \citep{Behrouz2025Titans}, demonstrating the potential of internal learners for long-context modeling. Subsequent work has further explored neural memory optimization \citep{Behrouz2025Atlas}, broader connections between TTT, retention, and online optimization \citep{Behrouz2025Miras,Behrouz2025Nested}, and extensions to vision \citep{Han2025ViT3}. However, naive TTT implementations typically rely on token-wise recurrent updates and therefore cannot fully exploit GPU parallelism. As in linear attention and SSMs, existing work often adopts chunkwise TTT implementations \citep{Zhang2025TTTRight,Li2025TNT} to achieve a better balance between computational efficiency and empirical performance.

%% file: sections/approach.tex

\section{Method}
\subsection{Analyzing the Computation of TTT}

Previous TTT methods typically hard-code both the forward and backward computations. However, we observe that this process can be automated. Consider an $m$-layer TTT module with nonlinear activation functions $f^{(i)}$, where $i\in\{1,\ldots,m\}$ indexes layers inside the TTT inner learner. Given an input $\mathbf K \in \mathbb R^{n \times d}$, where $n$ denotes the sequence length and $d$ denotes the feature dimension, we first perform a forward pass, which we refer to as the \emph{train-view forward}:

\begin{equation}
\begin{aligned}
\mathbf K^{(0)} &= \mathbf K, \\
d^{(0)} &= d, \\
\mathbf U^{(i)} &= \mathbf K^{(i-1)} \mathbf W^{(i)}, \\
\mathbf W^{(i)} &\in \mathbb R^{d^{(i-1)} \times d^{(i)}}, \\
\mathbf K^{(i)} &= f^{(i)}(\mathbf U^{(i)}) \in \mathbb R^{n \times d^{(i)}}, \\
\hat{\mathbf V} &= \mathbf K^{(m)}.
\end{aligned}
\end{equation}

Given a target $\mathbf V \in \mathbb R^{n \times d^{(m)}}$ and a loss function $\mathcal L$, we then perform a backward pass, which we refer to as the \emph{train-view backward}:
\begin{equation}
\begin{aligned}
l &= \mathcal L(\hat{\mathbf V}, \mathbf V), \\
\mathbf d\mathbf K^{(m)} &= \frac{\partial l}{\partial \hat{\mathbf V}}, \\
\mathbf d\mathbf U^{(i)}
&= \mathbf d\mathbf K^{(i)}
   \odot \left[f^{(i)}(\mathbf U^{(i)})\right]', \\
\mathbf d\mathbf K^{(i-1)}
&= \mathbf d\mathbf U^{(i)} [\mathbf W^{(i)}]^\top, \\
\mathbf d\mathbf W^{(i)}
&= [\mathbf K^{(i-1)}]^\top \mathbf d\mathbf U^{(i)}.
\end{aligned}
\end{equation}

Finally, we perform another forward pass using a query $\mathbf Q \in \mathbb R^{n \times d}$, which we refer to as the \emph{query-view forward}, where $\mathrm{Tril}(\mathbf X)$ denotes the lower-triangular part of $\mathbf X$ (i.e., $[\mathrm{Tril}(\mathbf X)]_{ij} = X_{ij}$ for $i \ge j$ and $0$ otherwise):
\begin{equation}
\begin{aligned}
\mathbf Q^{(0)} &= \mathbf Q, \\
\mathbf R^{(i)}
&= \mathbf Q^{(i-1)} \mathbf W^{(i)}
   - \mathrm{Tril}\!\left(
      \mathbf Q^{(i-1)}[\mathbf K^{(i-1)}]^\top
     \right)\mathbf d\mathbf U^{(i)}, \\
\mathbf W^{(i)} &= \mathbf W^{(i)} - \mathbf d\mathbf W^{(i)}, \\
\mathbf Q^{(i)} &= f^{(i)}(\mathbf R^{(i)}), \\
{\mathbf O} &= \mathbf Q^{(m)}.
\end{aligned}
\end{equation}

{This query-view form follows the causal dual update used in TTT~\citep{Sun2024TTT}.}
This example highlights several key observations:
\begin{itemize}[noitemsep, topsep=1pt]
    \item The train-view forward defines the computation performed by the TTT module.
    \item The train-view backward provides the activations required by the query-view forward.
    \item The query-view forward performs the actual forward computation of the TTT module.
    \item By defining the train-view and query-view operations for a single layer, we can derive the corresponding train-view and query-view operations for a multi-layer network.
\end{itemize}

\subsection{Modular TTT}

\begin{algorithm}[!t]
\caption{Modular TTT}
\label{alg:modular_ttt}
\small
\begin{algorithmic}[1]
\Require key $\mathbf K$, query $\mathbf Q$, target $\mathbf V$, topological order $\tau$, output node $o$, loss function $\mathcal L$, train-view operators $\{\phi_j^{\mathrm{train}}\}_{j \in \mathcal V}$, train-view backward operators $\{\phi_j^{\mathrm{train-bwd}}\}_{j \in \mathcal V}$, query-view operators $\{\phi_j^{\mathrm{query}}\}_{j \in \mathcal V}$, parameters $\{\theta_j\}_{j \in \mathcal V}$
\Ensure $\hat{\mathbf O}$

\Statex \textbf{Pass 1: train-view forward}
\State $\mathbf h_{\texttt{input}}^{\mathrm{train}} \gets \mathbf K$
\For{$j \in \tau$}
    \State $\mathbf h_j^{\mathrm{train}} \gets \phi_j^{\mathrm{train}}(\{\mathbf h_i^{\mathrm{train}}: i \in \mathrm{Par}(j)\};\,\theta_j)$
\EndFor
\State $\hat{\mathbf V} \gets \mathbf h_o^{\mathrm{train}}$

\Statex \textbf{Pass 2: train-view backward}
\State $l \gets \mathcal L(\hat{\mathbf V}, \mathbf V)$
\State Initialize $\bar{\mathbf h}_j^{\mathrm{train}} \gets \mathbf 0$ for all $j \in \mathcal V$, and set $\bar{\mathbf h}_o^{\mathrm{train}} \gets \partial l / \partial \hat{\mathbf V}$
\For{$j \in \mathrm{reverse}(\tau)$}
    \State $(\{\boldsymbol{\delta}_{i\leftarrow j}^{\mathrm{train}}\}_{i\in\mathrm{Par}(j)}, \Delta\theta_j) \gets \phi_j^{\mathrm{train-bwd}}(\bar{\mathbf h}_j^{\mathrm{train}}, \mathbf h_j^{\mathrm{train}}, \{\mathbf h_i^{\mathrm{train}}: i \in \mathrm{Par}(j)\}, \theta_j)$
    \For{$i \in \mathrm{Par}(j)$}
        \State $\bar{\mathbf h}_i^{\mathrm{train}} \gets \bar{\mathbf h}_i^{\mathrm{train}} + \boldsymbol{\delta}_{i\leftarrow j}^{\mathrm{train}}$
    \EndFor
\EndFor

\Statex \textbf{Pass 3: query-view forward}
\State $\mathbf h_{\texttt{input}}^{\mathrm{query}} \gets \mathbf Q$
\For{$j \in \tau$}
    \State $\mathbf h_j^{\mathrm{query}} \gets \phi_j^{\mathrm{query}}(\{\mathbf h_i^{\mathrm{query}}: i \in \mathrm{Par}(j)\}, {\{\mathbf h_i^{\mathrm{train}}: i \in \mathrm{Par}(j)\}, \bar{\mathbf h}_j^{\mathrm{train}}}, \theta_j, \Delta\theta_j)$
\EndFor
\State ${\mathbf O} \gets \mathbf h_o^{\mathrm{query}}$
\State \Return ${\mathbf O}$
\end{algorithmic}
\end{algorithm}

As in the example above, a general TTT module can be viewed as a computation graph. By defining the \texttt{train-view forward}, \texttt{train-view backward}, and \texttt{query-view forward} rules for each primitive in the graph, we can automatically compose the full graph-level TTT computation, without hand-deriving a new global update rule for every specific variant.

More concretely, we represent a TTT module as a directed acyclic graph $G=(\mathcal V,\mathcal E)$. Each node in the graph corresponds to a primitive operation, such as a linear map, an elementwise nonlinearity, a residual addition, or a normalization layer, while each edge represents a tensor dependency. Unlike prior work, which directly hard-codes a particular TTT variant, Modular TTT decomposes the inner learner into a set of local operations and assigns each of them explicit train-view and query-view semantics.

Let the primitive at node $j$ be denoted by $\phi_j$, with inputs from its parent set $\mathrm{pa}(j)$, parameters denoted by $\theta_j$ and $\mathbf h_j^{\text{train}}$ denotes the input and output of a node. In the train-view, we execute the forward pass of each node in topological order:
\begin{equation}
\begin{aligned}
\mathbf h_j^{\text{train}}
&=
\phi_j^{\text{train}}\big(
\{\mathbf h_i^{\text{train}} : i \in \mathrm{pa}(j)\};
\theta_j
\big),
\end{aligned}
\end{equation}
which produces the train-view output $\hat{\mathbf V}$, and we compute the inner learning loss $l = \mathcal L(\hat{\mathbf V}, \mathbf V)$. Next, we execute the train-view backward in reverse topological order to obtain intermediate gradients, local parameter updates, and backward signals to parent nodes. Denoting $\bar{\mathbf h}_j^{\text{train}} = \partial l / \partial \mathbf h_j^{\text{train}}$, the backward computation at node $j$ can be written as
\begin{equation}
\left(
\{\bar{\mathbf h}_i^{\text{train}}\}_{i \in \mathrm{pa}(j)},
\Delta \theta_j
\right)
=
\phi_j^{\text{train-bwd}}
\big(
\bar{\mathbf h}_j^{\text{train}},
\mathbf h_j^{\text{train}},
\{\mathbf h_i^{\text{train}}\}_{i \in \mathrm{pa}(j)},
\theta_j
\big),
\end{equation}

where $\Delta \theta_j$ denotes the parameter update induced by the inner learning rule. {The train-view activations and gradients, represented by $\{\mathbf h_i^{\text{train}}\}_{i \in \mathrm{pa}(j)}$ and $\bar{\mathbf h}_j^{\text{train}}$, are retained for the subsequent query-view computation.} The multi-layer MLP example above is precisely a special case of this general procedure on a computation graph.

After obtaining the train-view activations, backpropagated gradients, and parameter updates, we perform the query-view forward on a query input $\mathbf Q$. Unlike an ordinary forward pass, the query-view depends not only on the current input, but also on the {train-view activations, gradients,} and parameter updates produced by the train-view. For each node, we define the query-view computation as
\begin{equation}
\mathbf h_j^{\text{query}}
=
\phi_j^{\text{query}}
\big(
\{\mathbf h_i^{\text{query}} : i \in \mathrm{pa}(j)\},
{\{\mathbf h_i^{\text{train}} : i \in \mathrm{pa}(j)\},
\bar{\mathbf h}_j^{\text{train}}},
\theta_j,
\Delta \theta_j
\big).
\end{equation}

Automatic differentiation operates on an already specified computation graph. Applied to the train-view loss, it produces $\mathbf d\hat{\mathbf V}$ and the local backward signals propagated through the learner graph. The causal query-view readout and the fast-weight state transition require separate primitive-level specifications. Modular TTT registers these rules and composes them over the learner DAG. Executing the query-view forward in topological order then produces the final output $\hat{\mathbf O}$ of the TTT module.

This modular formulation provides two direct benefits. First, it significantly reduces the cost of constructing new TTT variants. Rather than manually deriving custom forward and backward rules for every new architecture, we only need to reorganize or replace local primitives in the graph. Second, it enables systematic ablations. Since different design factors correspond to different nodes or subgraphs, we can vary them in a more controlled way and directly analyze how each factor affects training loss and downstream performance. Therefore, Modular TTT should not be viewed as yet another specific TTT variant, but rather as a unified framework for expressing, implementing, and analyzing the design space of TTT. The full algorithmic procedure is given in Algorithm~\ref{alg:modular_ttt}. As summarized in Table~\ref{tab:modular_ttt_all}, we define a set of primitives in Modular TTT and specify for each primitive the corresponding $\phi^{\mathrm{train}}$, $\phi^{\mathrm{train\text{-}bwd}}$, and $\phi^{\mathrm{query}}$. Similar to TTT and linear attention, Modular TTT also incorporates learning-rate and decay mechanisms, and supports multiple choices of loss functions. Additional method details are provided in Appendix~\ref{app:framework_runtime}.

\begin{table*}[t]
\centering
\footnotesize
\setlength{\tabcolsep}{5pt}
\renewcommand{\arraystretch}{1.0}
\setlength{\aboverulesep}{0.1ex}
\setlength{\belowrulesep}{0.1ex}
\caption{Primitive operators and loss functions used in Modular TTT. Here, $\boldsymbol{\eta}$ denotes the learning rate and $\mathbf M$ denotes the decay matrix. For the \textsc{Gate} operator, $\mathrm{sum}(\cdot)$ and $\mathrm{cumsum}(\cdot)$ are taken along the sequence dimension. Here, $s$ denotes a normalization constant. When $s = n$, we average the loss over samples; otherwise, we use the unnormalized loss.
}
\label{tab:modular_ttt_all}

\textbf{Weight Operators}
\vspace{0.3em}

\begin{tabularx}{\textwidth}{@{}l
>{\hsize=0.60\hsize\raggedright\arraybackslash}X
>{\hsize=0.95\hsize\raggedright\arraybackslash}X
>{\hsize=1.35\hsize\raggedright\arraybackslash}X
@{}}
\toprule
Primitive & $\phi^{\mathrm{train}}$ & $\phi^{\mathrm{train\text{-}bwd}}$ & $\phi^{\mathrm{query}}$ \\
\midrule

Linear
& \makecell[l]{$\hat{\mathbf V} = \mathbf K \mathbf W$}
& \makecell[l]{$\mathbf d\mathbf K = \mathbf d\hat{\mathbf V}\mathbf W^\top$ \\
$\hat{\mathbf K} = \mathbf K \odot \boldsymbol{\eta},\ \mathbf d\mathbf W = \hat{\mathbf K}^\top \mathbf d\hat{\mathbf V}$}
& \makecell[l]{$\mathbf O = \mathbf Q \mathbf W - \mathrm{Tril}\!\left(\mathbf Q \hat{\mathbf K}^\top \odot \mathbf M\right)\mathbf d\hat{\mathbf V}$ \\
$\mathbf W = \mathbf W - \mathbf d\mathbf W$}
\\
\cmidrule(lr){2-2}\cmidrule(lr){3-3}\cmidrule(lr){4-4}

Gate
& \makecell[l]{$\hat{\mathbf V} = \mathbf K\,\mathrm{diag}(\mathbf W)$}
& \makecell[l]{$\mathbf d\mathbf K = \mathbf d\hat{\mathbf V}\,\mathrm{diag}(\mathbf W)$ \\
$\hat{\mathbf K} = \mathbf K \odot \boldsymbol{\eta}$ \\
$\mathbf d\mathbf W = \mathrm{sum}(\hat{\mathbf K} \odot \mathbf d\hat{\mathbf V})$}
& \makecell[l]{$\mathbf O = \mathbf Q\,\mathrm{diag}(\mathbf W) - \mathbf Q \odot \mathrm{cumsum}(\hat{\mathbf K} \odot \mathbf d\hat{\mathbf V})$ \\
$\mathbf W = \mathbf W - \mathbf d\mathbf W$}
\\

\bottomrule
\end{tabularx}

\vspace{0.9em}
\textbf{Non-weight Operators and Loss Functions}
\vspace{0.3em}

\begin{tabularx}{\textwidth}{l Y Y}
\toprule
Primitive & $\phi^{\mathrm{train}} / \phi^{\mathrm{query}}$ & $\phi^{\mathrm{train\text{-}bwd}}$ \\
\midrule

\multicolumn{3}{c}{\textit{Non-weight Operators}} \\

Norm
& \makecell[l]{$\hat{\mathbf v} = \mathbf k / \sigma$ \\
$\sigma = \sqrt{\left(\sum_{i=1}^{d} k_i^2\right)/d + \epsilon}$}
& \makecell[l]{$c = \left(\sum_{i=1}^{d} \hat v_i\, d\hat v_i\right)/d$ \\
$\mathbf d\mathbf k = (\mathbf d\hat{\mathbf v} - c\hat{\mathbf v})/\sigma$}
\\
\cmidrule(lr){2-2}\cmidrule(lr){3-3}

Act
& \makecell[l]{$\hat{\mathbf V} = f(\mathbf K)$}
& \makecell[l]{$\mathbf d\mathbf K = \mathbf d\hat{\mathbf V} \odot f'(\mathbf K)$}
\\
\cmidrule(lr){2-2}\cmidrule(lr){3-3}

Add
& \makecell[l]{$\hat{\mathbf V} = \mathbf K_1 + \mathbf K_2$}
& \makecell[l]{$\mathbf d\mathbf K_1 = \mathbf d\hat{\mathbf V},\ \mathbf d\mathbf K_2 = \mathbf d\hat{\mathbf V}$}
\\
\cmidrule(lr){2-2}\cmidrule(lr){3-3}

Mul
& \makecell[l]{$\hat{\mathbf V} = \mathbf K_1 \odot \mathbf K_2$}
& \makecell[l]{$\mathbf d\mathbf K_1 = \mathbf d\hat{\mathbf V} \odot \mathbf K_2,\ \mathbf d\mathbf K_2 = \mathbf d\hat{\mathbf V} \odot \mathbf K_1$}
\\

\midrule
\multicolumn{3}{c}{\textit{Loss Functions}} \\

Inner Product
& \makecell[l]{$l = -\left(\sum_{i=1}^{n} \hat{\mathbf v}_i^\top \mathbf v_i\right)/s$}
& \makecell[l]{$\mathbf d\hat{\mathbf V} = -\mathbf V/s$}
\\
\cmidrule(lr){2-2}\cmidrule(lr){3-3}

MSE
& \makecell[l]{$l = \left(\sum_{i=1}^{n} \lVert \hat{\mathbf v}_i - \mathbf v_i \rVert_2^2\right)/(2s)$}
& \makecell[l]{$\mathbf d\hat{\mathbf V} = (\hat{\mathbf V} - \mathbf V)/s$}
\\
\cmidrule(lr){2-2}\cmidrule(lr){3-3}

L1
& \makecell[l]{$l = \left(\sum_{i=1}^{n} \lVert \hat{\mathbf v}_i - \mathbf v_i \rVert_1\right)/s$}
& \makecell[l]{$\mathbf d\hat{\mathbf V} = \mathrm{sign}(\hat{\mathbf V} - \mathbf V)/s$}
\\
\cmidrule(lr){2-2}\cmidrule(lr){3-3}

RMSE
& \makecell[l]{$l = \sqrt{\left(\sum_{i=1}^{n} \lVert \hat{\mathbf v}_i - \mathbf v_i \rVert_2^2\right)/s}$}
& \makecell[l]{$\mathbf d\hat{\mathbf V} = (\hat{\mathbf V} - \mathbf V)\big/\sqrt{s \lVert \hat{\mathbf V} - \mathbf V \rVert_F^2}$}
\\

\bottomrule
\end{tabularx}
\end{table*}

%% file: sections/experiments.tex

\section{Experiments}
We implement Modular TTT in PyTorch, conduct language modeling experiments within the Flame framework \citep{yang2025flame}, and evaluate zero-shot downstream performance using \texttt{lm-eval-harness}~\citep{eval-harness}. We use Modular TTT to isolate the effects of existing and proposed TTT design choices under matched backbone, data, training budget, implementation, and evaluation settings. Our experiments consist of two stages. The first stage systematically ablates the major components of TTT, including the loss function, learning-rate initialization, decay, and nonlinearity. All experiments in this stage are conducted at 160M and 410M scale, with models trained on a large-scale English pretraining corpus for 10B tokens.

Based on the configurations identified in this ablation stage, we then perform a larger-scale study. In this stage, we train representative shortlisted variants on the same pretraining corpus for 100B tokens and compare them against representative baselines, including LLaMA, GDN \citep{Yang2025GDN}, and LaCT \citep{Zhang2025TTTRight} where available. We use the official TTT implementation \citep{Sun2024TTT} for {throughput and reproduction comparisons}. We use the OpenAI GPT-2 BPE tokenizer throughout. The sequence length is set to 2K in the ablation stage and 4K in the large-scale stage. For TTT variants, we use a chunk size of 256 to balance efficiency and empirical performance. Full architecture, training, and evaluation details are provided in Appendix~\ref{app:experiment_details}.
\subsection{Ablation}
\paragraph{Loss Ablation}

We first ablate the choice of loss function. Table~\ref{tab:loss_choice} reveals a clear two-level split: \textbf{MSE and inner product are the only competitive losses in the current TTT setting, while L1 and RMSE consistently underperform.} Across five paired runs at each scale, we observe only a marginal difference between MSE and inner-product loss (Appendix Table~\ref{tab:five_seed_robustness}).
This behavior can be partly explained by the fact that the loss only enters the fast update through the gradients
\begin{equation}
\label{eq:loss_gradient_update}
\begin{aligned}
\Delta \mathbf W
&= \hat{\mathbf K}^{\top}\mathbf d\hat{\mathbf V}, \\
\mathbf d\hat{\mathbf V}
&= \frac{\partial \mathcal L(\hat{\mathbf V},\mathbf V)}
        {\partial \hat{\mathbf V}} .
\end{aligned}
\end{equation}

{MSE preserves the residual magnitude in $\mathbf d\hat{\mathbf V}$, while inner product directly uses the target value as the write signal; both maintain an informative update scale. By contrast, L1 keeps only the sign of the residual, and RMSE normalizes the residual scale across the chunk, which weakens the memory write.} {A detailed analysis of the output gradients for these losses is provided in Appendix~\ref{app:loss_theory}.}

\begin{table*}[ht]
\centering
\begin{minipage}[t]{0.33\textwidth}
\centering
\captionof{table}{Experimental results for TTT loss functions. All methods use small-lr init and scalar decay. Only MSE and inner product are competitive choices, while L1 and RMSE perform substantially worse.}
\label{tab:loss_choice}
\scriptsize
\begin{tabular}{lcc}
\toprule
Loss & 160M & 410M \\
\midrule
L1    & 3.2665 & 2.9727 \\
Inner-product & 3.0383 & \textbf{2.7938} \\
MSE   & \textbf{3.0380} & 2.7949 \\
RMSE  & 3.0658 & 2.8041 \\
\bottomrule
\end{tabular}
\end{minipage}\hfill
\begin{minipage}[t]{0.62\textwidth}
\centering
\captionof{table}{Experimental results for TTT learning-rate initialization. Both MSE and inner-product losses are reported. Standard refers to initialization near 1; small-lr init sets $\eta_0 = 10^{-3}$. Small-lr init consistently achieves lower loss. Lower is better.
}
\label{tab:lr_init}
\footnotesize
\begin{tabular}{llcccc}
\toprule
\multirow{2}{*}{Loss} & \multirow{2}{*}{Decay} & \multicolumn{2}{c}{160M} & \multicolumn{2}{c}{410M} \\
\cmidrule(lr){3-4}\cmidrule(lr){5-6}
 &  & standard & small-lr init & standard & small-lr init \\
\midrule
\multirow{2}{*}{MSE}
 & none   & 3.6012 & 3.2005 & 3.4036 & 2.9343 \\
 & scalar & 3.3035 & \textbf{3.0380} & 3.0820 & 2.7949 \\
\midrule
\multirow{2}{*}{\shortstack{Inner-\\product}}
 & none   & 3.2407 & 3.2028 & 2.9572 & 2.9273 \\
 & scalar & 3.0538 & 3.0383 & 2.8054 & \textbf{2.7938} \\
\bottomrule
\end{tabular}
\end{minipage}
\end{table*}

\begin{table*}[ht]
\centering
\caption{Experimental results for TTT decay ablation. All methods use small-lr init and both MSE and inner-product losses are reported. Scalar decay provides the best efficiency--performance trade-off, while vector decay achieves the lowest loss at {lower throughput and higher memory cost}. Lower loss is better; higher Tgs is better. Tgs denotes training tokens per accelerator per second, and Mem is measured in GB.
}
\label{tab:decay_tradeoff}
\resizebox{\textwidth}{!}{%
\begin{tabular}{lrrrrrrrrrrrr}
\toprule
\multirow{3}{*}{Decay}
  & \multicolumn{6}{c}{MSE} & \multicolumn{6}{c}{Inner-product} \\
\cmidrule(lr){2-7}\cmidrule(lr){8-13}
  & \multicolumn{3}{c}{160M} & \multicolumn{3}{c}{410M}
  & \multicolumn{3}{c}{160M} & \multicolumn{3}{c}{410M} \\
\cmidrule(lr){2-4}\cmidrule(lr){5-7}\cmidrule(lr){8-10}\cmidrule(lr){11-13}
  & Loss & tgs & Mem & Loss & tgs & Mem
  & Loss & tgs & Mem & Loss & tgs & Mem \\
\midrule
none
  & 3.2005 & 109{,}399 & 27.54 & 2.9343 & 36{,}956 & 37.25
  & 3.2028 & 110{,}990 & 26.88 & 2.9273 & 37{,}668 & 35.56 \\
scalar
  & 3.0380 & 105{,}118 & 27.57 & 2.7949 & 34{,}978 & 37.30
  & 3.0383 & 105{,}878 & 26.91 & 2.7938 & 35{,}709 & 35.61 \\
vector
  & 3.0038 &  79{,}024 & 30.74 & 2.7821 & 27{,}074 & 40.81
  & 3.0048 &  80{,}156 & 29.54 & 2.7822 & 27{,}450 & 38.74 \\
\bottomrule
\end{tabular}}
\end{table*}

\paragraph{Learning-rate initialization}
\label{sec:sinlr}

We parameterize the learning rate following ~\citep{grazzi2024unlocking} as
$\eta_t = 2\,\mathrm{Sigmoid}(\beta_t + b)$, where $\beta_t = \mathbf x_t^\top \mathbf w \in \mathbb R$. 
Under the default initialization, $\beta_t$ is typically close to $0$ at the beginning of training, so $\eta_t \approx 2\,\mathrm{Sigmoid}(b)$.
We refer to the setting with $b=0$ and thus $\eta_t \approx 1$ as the \emph{default initialization}, and to the setting with
\[
\begin{aligned}
b
&= \log \frac{0.5 \times 10^{-3}}{1 - 0.5 \times 10^{-3}}, \\
\eta_t
&\approx 10^{-3},
\end{aligned}
\]
as \emph{small-lr init}. A similar initialization strategy is also adopted in the official TTT implementation \citep{Sun2024TTT} and in LaCT \citep{Zhang2025TTTRight}.

\textbf{TTT is highly sensitive to learning-rate initialization, and small-lr init consistently improves stability and final performance.} Under both MSE and inner product losses, small-lr init yields substantially lower loss, with a more pronounced improvement under MSE.

To understand this behavior, consider the parameter update under MSE:
\begin{equation}
\label{eq:mse_update}
\begin{aligned}
\mathbf W
&=
\mathbf W - \mathbf K^\top \mathrm{diag}(\boldsymbol{\eta})
(\mathbf K \mathbf W - \mathbf V)
\\
&=
\left(\mathbf I - \mathbf K^\top \mathrm{diag}(\boldsymbol{\eta}) \mathbf K\right)\mathbf W
+ \mathbf K^\top \mathrm{diag}(\boldsymbol{\eta}) \mathbf V.
\end{aligned}
\end{equation}
If the learning rate is too large, the matrix $
\mathbf I - \mathbf K^\top \mathrm{diag}(\boldsymbol{\eta}) \mathbf K$
may have eigenvalues with magnitude larger than $1$, which leads to instability. Therefore, small-lr init is beneficial for stable learning. {A detailed stability analysis of the update spectrum and chunk-scale effect is provided in Appendix~\ref{app:sinlr}.}

\paragraph{Decay Ablation}

Motivated by prior work on linear attention, we also study decay in the TTT setting, considering three variants: no decay, scalar decay, and vector decay. Consistent with earlier findings, Table~\ref{tab:decay_tradeoff} shows the same ordering under both losses. Without decay, past contributions to the fast weights accumulate without attenuation, which limits the model's ability to forget stale context. Scalar decay applies a global forgetting factor to past contributions at each step, multiplicatively contracting them and recovering most of the quality gain with almost no additional {computational overhead}. Vector decay extends this mechanism to feature-selective forgetting and achieves the best loss, but at the cost of roughly a 25\% drop in throughput and about 3\,GB higher peak memory at both model scales. \textbf{Scalar decay recovers most of the quality gain from vector decay at negligible {efficiency cost}; no decay is clearly weaker.} To balance efficiency and performance, we use scalar decay throughout the remaining discussion.

\paragraph{Role of Non-linearity}
To study the role of nonlinearity, we append {non-fast-weight} activations, including GELU and SiLU, after the linear module, and separately test Norm as a normalization operator. As shown in Table~\ref{tab:linear_nonlinearity}, \textbf{simple {non-fast-weight} activations consistently improve performance, while Norm is mixed; GELU/SiLU provide the best trade-off between quality and efficiency.} Linear-SiLU outperforms Linear in all five paired runs at both the 160M and 410M scales (Appendix Table~\ref{tab:five_seed_robustness}).

{For a linear node followed by an activation $\sigma$, the write to the preceding fast weight becomes}
\begin{equation}
\label{eq:activation_write}
{\Delta \mW = \rmK^\top\!\left(\rmD_o \odot \sigma'(\rmU)\right),
}
\end{equation}
where $\rmU=\rmK\mW$ and $\rmD_o$ is the gradient propagated from the activation output. Thus the nonlinearity increases expressivity by gating the write coordinatewise. Moreover, because the derivative is bounded, it does not induce excessively large activations.

Norm exhibits less stable results and can occasionally degrade performance. We attribute this instability in part to the fact that normalization can lead to larger activations. Note that the gradient of Norm is: 
\begin{equation}
\label{eq:rmsnorm_backward}
\begin{aligned}
\mathbf d\mathbf z
&=
\frac{1}{\sigma(\mathbf z)}
\left(
\mathbf d\mathbf y
-
\mathrm{mean}(\mathbf y\odot\mathbf d\mathbf y)\mathbf y
\right), \\
\mathbf y
&= \mathbf z/\sigma(\mathbf z), \\
\sigma(\mathbf z)
&= \sqrt{\|\mathbf z\|_2^2/d+\epsilon}.
\end{aligned}
\end{equation}
The $1/\sigma(\mathbf z)$ factor can amplify gradient when $\sigma(\mathbf z)$ is small. This helps explain why Norm is less robust than pointwise activations in Table~\ref{tab:linear_nonlinearity}. 

\begin{table*}[!t]
\centering
\caption{Experimental results for nonlinear operators. All methods use small-lr init and scalar decay, without mean scaling. GELU and SiLU consistently improve performance and provide the best trade-off between {throughput} and final loss.
}
\label{tab:linear_nonlinearity}
\resizebox{\textwidth}{!}{%
\begin{tabular}{lrrrrrrrrrrrr}
\toprule
\multirow{3}{*}{Method}
  & \multicolumn{6}{c}{MSE} & \multicolumn{6}{c}{Inner-product} \\
\cmidrule(lr){2-7}\cmidrule(lr){8-13}
  & \multicolumn{3}{c}{160M} & \multicolumn{3}{c}{410M}
  & \multicolumn{3}{c}{160M} & \multicolumn{3}{c}{410M} \\
\cmidrule(lr){2-4}\cmidrule(lr){5-7}\cmidrule(lr){8-10}\cmidrule(lr){11-13}
  & Loss & tgs & Mem & Loss & tgs & Mem
  & Loss & tgs & Mem & Loss & tgs & Mem \\
\midrule
Linear
  & 3.0380          & 105{,}118 & 27.57 & 2.7949 & 34{,}978 & 37.30
  & 3.0383          & 105{,}878 & 26.91 & 2.7938          & 35{,}709 & 35.61 \\
Linear + GELU
  & 3.0232          &  97{,}808 & 30.98 & 2.7844 & 32{,}114 & 42.83
  & 3.0225          &  98{,}405 & 30.97 & \textbf{2.7832} & 32{,}456 & 43.07 \\
Linear + SiLU
  & \textbf{3.0205} &  98{,}293 & 30.98 & 2.7849 & 32{,}129 & 42.83
  & 3.0242          &  98{,}858 & 30.97 & 2.7838          & 32{,}453 & 43.07 \\
Linear + Norm
  & 3.0300          &  93{,}226 & 32.15 & 2.8192 & 30{,}501 & 44.76
  & 3.0335          &  94{,}625 & 31.01 & 2.7899          & 31{,}139 & 43.11 \\
\bottomrule
\end{tabular}}
\end{table*}

\paragraph{Deep Memory ablation}

We next study whether replacing a single linear layer with an MLP brings additional gains. Under our controlled language-modeling setting and stabilization sweeps, Table~\ref{tab:multilayer_90m} gives a negative answer:

\textbf{Deeper graph learners do not surpass the shallow frontier, and shallow linear variants remain the strongest among the tested configurations.} This difficulty already appears in the two-layer linear case, and the same barrier remains after adding activations or normalization. Table~\ref{tab:multilayer_90m} keeps the main-text comparison focused on chain-structured deep learners. Across these variants, added depth changes the number of layers but does not change the qualitative conclusion: activation placement gives only limited gains, and norm-containing variants either diverge or remain behind the shallow frontier. Residual and gated variants are reported in Appendix~\ref{app:family_support}, where they are analyzed together with their stabilization sweeps. Appendix~\ref{app:deep_theory} provides the corresponding factor-coupling analysis for deeper fast learners.

\begin{table*}[t]
\centering
\footnotesize
\renewcommand{\arraystretch}{0.96}
\setlength{\tabcolsep}{3pt}
\caption{Deep TTT variants at 160M. {Residual/gated variants and full stabilized sweeps are in Appendix~\ref{app:family_support}.} Lower is better; $\times$ denotes divergence.}
\label{tab:multilayer_90m}
\begin{tabular*}{\textwidth}{@{\extracolsep{\fill}}lrlrlr@{}}
\toprule
\multicolumn{2}{c}{Shallow / Product}
&
\multicolumn{2}{c}{Activation-only}
&
\multicolumn{2}{c}{Norm-containing}
\\
\cmidrule(lr){1-2}
\cmidrule(lr){3-4}
\cmidrule(l){5-6}
Method & Loss & Method & Loss & Method & Loss \\
\midrule
Linear
& 3.0380
& Linear-Linear-SiLU
& 3.1223
& Linear-Linear-Norm
& $\times$
\\
Linear-SiLU
& \textbf{3.0205}
& Linear-SiLU-Linear
& 3.1240
& Linear-Norm-Linear
& 3.1594
\\
Linear-Norm
& 3.0300
& Linear-SiLU-Linear-SiLU
& 3.1156
& Linear-Norm-Linear-Norm
& $\times$
\\
Linear-Linear
& 3.1265
&
&
& Linear-SiLU-Linear-Norm
& 3.1144
\\
\bottomrule
\end{tabular*}
\end{table*}


\begin{deepmemorybox}

\textbf{Why Is Deep TTT Memory more difficult to optimize? }

Table~\ref{tab:multilayer_90m} gives a consistent negative result: increasing the depth of the fast-weight memory does not improve over the shallow frontier. We argue that this is not merely an optimization accident, but a structural consequence of how one-step TTT updates interact with factorized fast weights. Consider a TTT memory of the form
\begin{equation}
\mathbf Y = f(\mathbf X) = \mathbf X \mathbf W^{(1)} \mathbf W^{(2)}.
\end{equation}
We compare two parameterizations. First, treat the product $\mathbf W^{(1)} \mathbf W^{(2)} \triangleq \mathbf W$ as a single fast weight. 

In this case, the update is given by $\Delta = \mathbf X^\top \mathbf{dY}$ and $\mathbf W = \mathbf W - \Delta$.

Now consider the two-factor memory. Define
\begin{equation}
\mathbf X^{(1)} \triangleq \mathbf X \mathbf W^{(1)}, \qquad
\mathbf X^{(2)} = \mathbf Y \triangleq \mathbf X^{(1)} \mathbf W^{(2)}.
\end{equation}

The corresponding updates are
\begin{equation}
\begin{aligned}
\Delta^{(2)}
&= [\mathbf X^{(1)}]^\top \mathbf{dY} 
= [\mathbf X \mathbf W^{(1)}]^\top \mathbf{dY} 
= [\mathbf W^{(1)}]^\top \mathbf X^\top \mathbf{dY} 
= [\mathbf W^{(1)}]^\top \Delta, \\
\mathbf{dX}^{(1)}
&= \mathbf{dX}^{(2)} [\mathbf W^{(2)}]^\top 
= \mathbf{dY} [\mathbf W^{(2)}]^\top, \\
\Delta^{(1)}
&= \mathbf X^\top \mathbf{dX}^{(1)} 
= \mathbf X^\top \mathbf{dY} [\mathbf W^{(2)}]^\top 
= \Delta [\mathbf W^{(2)}]^\top .
\end{aligned}
\end{equation}

Therefore, the induced update on the effective fast weight becomes
\begin{equation}
\begin{aligned}
&(\mathbf W^{(1)} - \Delta [\mathbf W^{(2)}]^\top)
(\mathbf W^{(2)} - [\mathbf W^{(1)}]^\top \Delta) \\
&=
\mathbf W^{(1)}\mathbf W^{(2)}
- \Delta [\mathbf W^{(2)}]^\top \mathbf W^{(2)}
- \mathbf W^{(1)}[\mathbf W^{(1)}]^\top \Delta
+ \Delta [\mathbf W^{(2)}]^\top [\mathbf W^{(1)}]^\top \Delta .
\end{aligned}
\end{equation}

For any nonzero scalar $c$, we have $\mathbf W^{(1)}\mathbf W^{(2)} = (c\mathbf W^{(1)})(c^{-1}\mathbf W^{(2)})$.

Thus the represented memory function $f(\mathbf X)=\mathbf X\mathbf W$ remains unchanged.

However, under this rescaling, the two-layer update becomes
\begin{equation}
\Delta [\mathbf W^{(2)}]^\top [\mathbf W^{(1)}]^\top \Delta
- c^2\mathbf W^{(1)}\mathbf W^{(1)\top}\Delta
- c^{-2}\Delta\mathbf W^{(2)\top}\mathbf W^{(2)}.
\end{equation}

Therefore, two factorizations that represent the same effective fast weight can induce substantially different TTT update directions.

When \(c\) is large, the dominant terms are
\begin{equation}
\Delta [\mathbf W^{(2)}]^\top [\mathbf W^{(1)}]^\top \Delta
- c^2\mathbf W^{(1)}\mathbf W^{(1)\top}\Delta .
\end{equation}

When \(c\) is small, the dominant terms are
\begin{equation}
\Delta [\mathbf W^{(2)}]^\top [\mathbf W^{(1)}]^\top \Delta
- c^{-2}\Delta\mathbf W^{(2)\top}\mathbf W^{(2)}.
\end{equation}

Therefore, optimization is not only required to learn the correct effective fast weight \(\mathbf W\), but also to discover a factorization that induces favorable update dynamics. This additional degree of freedom makes deep TTT Memory substantially more difficult to optimize.

\end{deepmemorybox}

\subsection{Efficiency Benchmark}
\label{sec:throughput}
The modular formulation reduces the cost of constructing and comparing TTT variants. The following benchmarks separately evaluate the efficiency of primitive-level backward computation and the end-to-end training efficiency of fixed learner graphs.

\textbf{Primitive-level backward computation.} To isolate the cost of executing local backward rules, we compare the analytic backward operators for Linear and Norm with reference implementations based on \texttt{torch.autograd.grad} under identical inputs. The Linear and Norm input shapes are $(2,128,12,128,64)$ and $(2,128,12,128)$, respectively. We use 20 warm-up iterations and 100 measured iterations.

\begin{table*}[!t]
\centering
\caption{Microbenchmark of primitive-level backward computation. We report mean latency and maximum memory over 100 measured iterations after 20 warm-up iterations. Speedup is the autodiff-reference latency divided by the analytic-operator latency. Lower latency and memory are better.}
\label{tab:primitive_backward}
\small
\begin{tabular*}{\textwidth}{@{\extracolsep{\fill}}llccc@{}}
\toprule
Primitive & Implementation & Latency (ms) & Max mem. (MB) & Speedup \\
\midrule
Linear & Analytic & \textbf{0.3127} & 20.4 & $1.65\times$ \\
Linear & Autodiff ref. & 0.5161 & 19.6 & -- \\
Norm & Analytic & \textbf{0.3146} & \textbf{19.3} & $2.62\times$ \\
Norm & Autodiff ref. & 0.8229 & 31.3 & -- \\
\bottomrule
\end{tabular*}
\end{table*}

The analytic Linear and Norm operators yield $1.65\times$ and $2.62\times$ speedups over their autodiff references, respectively. Peak memory is comparable for Linear (20.4 versus 19.6 MB). For Norm, the analytic operator reduces peak memory from 31.3 to 19.3 MB. Under the evaluated software configuration, our inner-loop reference path based on \texttt{torch.autograd.grad} is incompatible with \texttt{torch.compile}. The custom analytic operators are included in the compiled graph.

\textbf{End-to-end throughput.} Table~\ref{tab:speed_vs_official} compares Modular TTT with the official TTT implementation \citep{Sun2024TTT} in two representative 160M norm-containing settings. Modular TTT achieves a $2.2\times$--$3.3\times$ throughput improvement over the official implementation.

\begin{table*}[!t]
\centering
\caption{{Throughput validation} in two representative 160M norm-containing settings on GPU. TTT-Linear and TTT-MLP follow the nomenclature of \citet{Sun2024TTT}. Modular TTT achieves about $2.2\times$--$3.3\times$ the training throughput of the official TTT implementation.}
\label{tab:speed_vs_official}
\small
\begin{tabular*}{\textwidth}{@{\extracolsep{\fill}}lccc@{}}
\toprule
Setting & Official TTT & Modular TTT & Speedup \\
\midrule
TTT-Linear (Linear + Norm) & 43{,}397.5 & 93{,}366.9 & ${\approx}2.2\times$ \\
TTT-MLP (Linear + GELU + Linear + Norm) & 21{,}336.9 & 71{,}101.1 & ${\approx}3.3\times$ \\
\bottomrule
\end{tabular*}
\end{table*}

\subsection{Scale Up}
\label{sec:large_scale}

Based on the preceding ablations, we select representative variants that balance validation loss, {throughput}, and simplicity. We keep MSE and inner product as the two competitive loss functions, use scalar decay as the efficiency-oriented decay choice rather than the absolute lowest-loss option, and compare a single linear learner with a linear learner followed by a SiLU activation.

Table~\ref{tab:large_scale_eval} reports downstream benchmarks at 410M and 1.45B, covering perplexity, multiple-choice, and containment-style tasks evaluated with lm-evaluation-harness \citep{eval-harness}. The shortlisted Modular TTT variants remain competitive with strong recurrent baselines on training loss and multiple-choice accuracy, while containment-style tasks remain more challenging. At 410M, the SiLU variant improves the multiple-choice average over the external baselines; at 1.45B, the inner-product linear variant gives the strongest Modular TTT result and is close to GDN on average multiple-choice accuracy. Appendix~\ref{app:external_repro} reports additional external-baseline checks, including the LaCT comparison.

\newcommand{\avgcell}[1]{\begingroup\setlength{\fboxsep}{1pt}\colorbox{seedblue!8}{\strut #1}\endgroup}
\newcommand{\metrichead}[2]{\shortstack{#1\\#2}}

\begin{table*}[!t]
\centering
\scriptsize
\setlength{\tabcolsep}{1.5pt}
\caption{Large-scale downstream evaluation at 410M and 1.45B. Arrows indicate the direction of improvement. External baselines: LLaMA \citep{Touvron2023Llama2}, GDN \citep{Yang2025GDN}, and LaCT \citep{Zhang2025TTTRight} where available.}
\label{tab:large_scale_eval}
\resizebox{\textwidth}{!}{%
\begin{tabular}{@{}lc|cccc|ccccccccc|cccc@{}}
\toprule
\multirow{2}{*}{Method} & \multirow{2}{*}{Params} & \multicolumn{4}{c|}{Perplexity} & \multicolumn{9}{c|}{Multiple-choice} & \multicolumn{4}{c}{Containment} \\
\cmidrule(lr){3-6}\cmidrule(lr){7-15}\cmidrule(lr){16-19}
 & & \avgcell{loss$\downarrow$} & \metrichead{Wiki.}{ppl$\downarrow$} & \metrichead{LMB.}{ppl$\downarrow$} & \avgcell{\metrichead{Avg}{ppl$\downarrow$}} & \metrichead{BoolQ}{acc$\uparrow$} & \metrichead{PIQA}{acc$\uparrow$} & \metrichead{Hella.}{acc-n$\uparrow$} & \metrichead{Wino.}{acc$\uparrow$} & \metrichead{ARC-e}{acc$\uparrow$} & \metrichead{ARC-c}{acc-n$\uparrow$} & \metrichead{OBQA}{acc-n$\uparrow$} & \metrichead{SIQA}{acc$\uparrow$} & \avgcell{\metrichead{Avg}{acc$\uparrow$}} & \metrichead{SWDE}{acc$\uparrow$} & \metrichead{SQuAD}{acc$\uparrow$} & \metrichead{FDA}{acc$\uparrow$} & \avgcell{\metrichead{Avg}{acc$\uparrow$}} \\
\midrule
\multicolumn{19}{@{}l}{\textit{External baselines, 410M}} \\
LLaMA & 410M & \avgcell{2.5479} & 20.63 & 25.64 & \avgcell{23.14} & 59.51 & 69.15 & 47.47 & 55.80 & 65.28 & 31.83 & 35.40 & 40.63 & \avgcell{50.63} & 58.96 & 40.42 & 62.43 & \avgcell{53.94} \\
GDN   & 410M & \avgcell{2.5559} & 21.00 & 22.62 & \avgcell{21.81} & 59.69 & 71.00 & 47.64 & 52.88 & 65.66 & 31.83 & 34.80 & 40.07 & \avgcell{50.45} & 22.68 & 31.80 & 16.33 & \avgcell{23.60} \\
LaCT  & 410M & \avgcell{2.5582} & 21.28 & 25.94 & \avgcell{23.61} & 60.86 & 70.13 & 47.10 & 52.49 & 65.03 & 30.03 & 35.80 & 40.02 & \avgcell{50.18} & 33.75 & 30.16 & 18.60 & \avgcell{27.50} \\
\midrule
\multicolumn{19}{@{}l}{\textit{Modular TTT MSE Loss, 410M}} \\
M      & 410M & \avgcell{2.5648} & 21.28 & 25.10 & \avgcell{23.19} & 53.76 & 69.04 & 47.92 & 52.33 & 67.21 & 31.74 & 35.40 & 41.10 & \avgcell{49.81} & 25.02 & 32.14 &  8.53 & \avgcell{21.90} \\
M-SiLU & 410M & \avgcell{2.5582} & 21.07 & 23.80 & \avgcell{22.43} & 61.44 & 70.40 & 48.10 & 52.88 & 65.66 & 33.87 & 35.00 & 40.53 & \avgcell{50.99} & 28.08 & 31.77 &  8.80 & \avgcell{22.88} \\
\midrule
\multicolumn{19}{@{}l}{\textit{Modular TTT Inner-Product Loss, 410M}} \\
M      & 410M & \avgcell{2.5621} & 21.38 & 24.81 & \avgcell{23.10} & 56.30 & 69.37 & 47.71 & 53.04 & 64.52 & 30.89 & 37.40 & 39.41 & \avgcell{49.83} & 26.28 & 33.91 &  9.62 & \avgcell{23.27} \\
M-SiLU & 410M & \avgcell{2.5569} & 20.98 & 23.22 & \avgcell{22.10} & 60.03 & 70.62 & 48.14 & 53.20 & 66.50 & 31.91 & 34.40 & 39.92 & \avgcell{50.59} & 25.38 & 31.84 & 12.89 & \avgcell{23.37} \\
\midrule
\multicolumn{19}{@{}l}{\textit{External baselines, 1.45B}} \\
LLaMA & 1.45B & \avgcell{2.3041} & 15.26 & 12.78 & \avgcell{14.02} & 62.91 & 73.83 & 59.37 & 57.22 & 73.48 & 38.31 & 39.80 & 40.43 & \avgcell{55.67} & 75.34 & 44.50 & 75.77 & \avgcell{65.20} \\
GDN   & 1.45B & \avgcell{2.3042} & 15.53 & 11.87 & \avgcell{13.70} & 60.70 & 74.37 & 59.16 & 60.54 & 73.61 & 39.68 & 41.00 & 41.30 & \avgcell{56.30} & 52.39 & 38.37 & 20.15 & \avgcell{36.97} \\
\midrule
\multicolumn{19}{@{}l}{\textit{Modular TTT MSE Loss, 1.45B}} \\
M      & 1.45B & \avgcell{2.3193} & 15.84 & 12.71 & \avgcell{14.28} & 60.37 & 74.43 & 58.96 & 58.41 & 72.94 & 40.61 & 41.80 & 41.71 & \avgcell{56.15} & 41.40 & 38.71 & 24.77 & \avgcell{34.96} \\
M-SiLU & 1.45B & \avgcell{2.3197} & 15.85 & 12.90 & \avgcell{14.37} & 59.91 & 72.91 & 58.92 & 58.64 & 73.23 & 40.02 & 40.60 & 41.15 & \avgcell{55.67} & 44.19 & 37.77 & 24.41 & \avgcell{35.46} \\
\midrule
\multicolumn{19}{@{}l}{\textit{Modular TTT Inner-Product Loss, 1.45B}} \\
M      & 1.45B & \avgcell{2.3150} & 15.92 & 12.19 & \avgcell{14.05} & 62.35 & 73.18 & 58.70 & 59.98 & 73.48 & 40.78 & 41.00 & 42.07 & \avgcell{56.44} & 43.65 & 37.90 & 17.42 & \avgcell{32.99} \\
M-SiLU & 1.45B & \avgcell{2.3156} & 15.84 & 12.31 & \avgcell{14.07} & 56.97 & 74.21 & 58.80 & 58.64 & 73.74 & 39.93 & 39.20 & 42.32 & \avgcell{55.48} & 42.30 & 38.00 & 17.79 & \avgcell{32.70} \\
\bottomrule
\end{tabular}
}
\end{table*}

\paragraph{{Per-token Loss.}}
Appendix~\ref{app:per_token_loss} further examines per-token loss along 8k sequences, showing that the selected shallow Modular TTT variants remain stable around and beyond the 4k training context.
\paragraph{{RULER.}}
Detailed RULER \citep{Hsieh2024RULER} evaluations are reported in Appendix~\ref{app:ruler_details}, with family-level and template-level results in Tables~\ref{tab:ruler_single_family_full}--\ref{tab:ruler_multi_family_full}.
The inner-product linear variant is the strongest Modular TTT configuration on \texttt{niah\_single} at 410M, while the MSE SiLU variant is strongest at 1.45B for contexts up to 4k. However, LLaMA remains substantially stronger, especially at 8k, indicating that precise long-context recall remains a limitation of the current shallow Modular TTT variants.

\section{Conclusion}

We presented Modular TTT, a framework that factorizes the TTT inner learner into explicit, independently controllable design dimensions. Modular TTT represents the fast-weight network as a DAG of registered primitives and automatically composes their train-view forward, train-view backward, and causal query-view rules into the full graph-level TTT computation, including the fast-weight state transition. This shared procedure reduces the cost of constructing and comparing TTT variants by avoiding topology-specific global update derivations for recombinations of existing primitives, and turns TTT design from a collection of custom implementations into a modular space that can be explored and analyzed systematically. Using this framework, we identified which components of TTT provide consistent gains and which do not, and trained the resulting best variant to performance comparable to GDN at 410M and 1.45B scales.

%% file: sections/appendix.tex

\makeatletter
\newcommand{\appendixcontents}{%
  \section*{Appendix Contents}%
  \label{app:roadmap}%
  \begingroup
  \setlength{\parskip}{3pt}%
  \def\l@appendixsection##1##2{\par\noindent ##1 \dotfill ##2\par}%
  \renewcommand\numberline[1]{\textbf{Appendix~##1}\quad}%
  \@starttoc{apc}%
  \endgroup
}
\newcommand{\appsection}[1]{%
  \section{#1}%
  \addcontentsline{apc}{appendixsection}{\protect\numberline{\thesection}#1}%
}
\makeatother

\appendixcontents

\appsection{Method details}
\label{app:framework_runtime}

\subsection{Design axes}
\label{app:design_axes}

\begin{table}[ht]
\centering
\small
\renewcommand{\arraystretch}{1.18}
\setlength{\tabcolsep}{5pt}
\caption{Configuration axes in Modular TTT. The table summarizes the graph, loss, learning-rate, decay, normalization, and scheduling choices controlled by our implementation and experiments. Primitive and loss definitions follow Table~\ref{tab:modular_ttt_all}.}
\label{tab:instantiations}
\begin{tabular}{%
  >{\raggedright\arraybackslash}p{0.26\linewidth}%
  >{\raggedright\arraybackslash}p{0.44\linewidth}%
  >{\raggedright\arraybackslash}p{0.18\linewidth}%
}
\toprule
Axis & Options & Component \\
\midrule
\multicolumn{3}{l}{\textit{Graph and loss choices}} \\[2pt]
Primitive
  & Linear,\ Gate,\ Norm,\ Act,\ Add,\ Mul
  & Node operator $\phi_j$ \\
Loss
  & Inner product,\ MSE,\ L1,\ RMSE
  & Loss $\mathcal L$ \\
\midrule
\multicolumn{3}{l}{\textit{Update choices}} \\[2pt]
LR initialization
  & Standard init ($\eta_0\approx 1$),\ small-lr init ($10^{-3}$)
  & Initial scale $\eta_0$ \\
Decay
  & None,\ scalar,\ vector
  & Weight decay \\
\bottomrule
\end{tabular}
\end{table}

\subsection{Graph memory instantiations}
\label{app:instantiated_families}

The graph memory is specified by an ordered \texttt{graph\_nodes} list and a designated \texttt{graph\_output}. Each node is one of the primitives in Table~\ref{tab:modular_ttt_all}; linear nodes carry fast weights, while activation, addition, multiplication, and normalization nodes only transform activations or gradients. The implementation executes the same graph in the train-view forward, train-view backward, and query-view forward.

\begin{figure}[ht]
\centering
\tikzset{
  fnode/.style={draw, rounded corners=3pt, align=center, font=\small,
                minimum width=1.4cm, minimum height=0.60cm, fill=gray!18},
  pnode/.style={draw, rounded corners=3pt, align=center, font=\small,
                minimum width=1.4cm, minimum height=0.60cm, fill=white},
  garr/.style={-{Stealth[length=2mm,width=1.4mm]}, semithick}
}

\begin{tikzpicture}
  \node[pnode] (inp) at (0,    0)    {input};
  \node[fnode] (l0)  at (2.2,  0)    {linear$_0$};
  \node[pnode] (a0)  at (4.4,  0)    {act};
  \node[fnode] (l1)  at (6.6,  0)    {linear$_1$};
  \node[pnode] (add) at (6.6, -1.4)  {add};
  \node[pnode] (n0)  at (8.8, -1.4)  {norm};
  \draw[garr] (inp) -- (l0);
  \draw[garr] (l0)  -- (a0);
  \draw[garr] (a0)  -- (l1);
  \draw[garr] (l1)  -- (add);
  \draw[garr] (add) -- (n0);
  \draw[garr] (inp.south) |- (add.west);
\end{tikzpicture}

\smallskip
{\small Residual graph learner}

\medskip

\begin{tikzpicture}
  \node[pnode] (inp)  at (0,    0)     {input};
  \node[fnode] (linear$_0$)   at (2.1,  0.75)  {linear$_0$};
  \node[fnode] (linear$_1$)   at (2.1, -0.75)  {linear$_1$};
  \node[pnode] (silu) at (4.0,  0.75)  {silu};
  \node[pnode] (mult) at (4.0, -0.75)  {multiply};
  \node[fnode] (linear$_2$)   at (5.9, -0.75)  {linear$_2$};
  \draw[garr] (inp.north east) -- (linear$_0$.west);
  \draw[garr] (inp.south east) -- (linear$_1$.west);
  \draw[garr] (linear$_0$)        -- (silu);
  \draw[garr] (linear$_1$)        -- (mult);
  \draw[garr] (silu.south) -- (mult.north);
  \draw[garr] (mult)      -- (linear$_2$);
\end{tikzpicture}

\smallskip
{\small SwiGLU graph learner}

\caption{Two representative graph memories. Gray nodes contain fast weights, while white nodes do not. \textbf{Top:} Residual learner; the skip connection carries input directly to add, bypassing the two linear nodes. \textbf{Bottom:} SwiGLU learner \citep{Shazeer2020GLU,Zhang2025TTTRight}; two branches share the same input and are recombined by a multiplicative gate.}
\label{fig:graph_learners}
\end{figure}

\subsection{Gradient and update rules}
\label{app:backward_rules}

Table~\ref{tab:modular_ttt_all} lists the train-view, train-view backward, and query-view rules for all primitives. This subsection records the implementation-specific details behind the train-view backward pass of Algorithm~\ref{alg:modular_ttt}: the loss gradient at the output node and the gradients used to propagate it through the graph.

\paragraph{Output gradient rules.}
All loss forms use the normalization constant $s$ from Table~\ref{tab:modular_ttt_all}. Within one chunk, the sample count in the table is $n=c$, so
\begin{equation}
s = \begin{cases} c & \text{if mean\_loss = True,} \\ 1 & \text{otherwise,} \end{cases}
\end{equation}
where $c$ is the chunk size. The corresponding output gradients are
\begin{align}
\text{MSE:}\quad
\mathbf d\hat{\mathbf V} &= \frac{\hat{\mathbf V}-\mathbf V}{s}, \\
\text{L1:}\quad
\mathbf d\hat{\mathbf V} &= \frac{\mathrm{sign}(\hat{\mathbf V}-\mathbf V)}{s}, \\
\text{Inner product:}\quad
\mathbf d\hat{\mathbf V} &= -\frac{\mathbf V}{s}, \\
\text{RMSE:}\quad
\mathbf d\hat{\mathbf V} &= \frac{\hat{\mathbf V}-\mathbf V}{\sqrt{s\,\|\hat{\mathbf V}-\mathbf V\|_F^2}}.
\end{align}
These are the same rules used in the loss-choice analysis in Appendix~\ref{app:loss_theory}. MSE and inner product preserve chunk-level scale information; L1 and RMSE respectively remove magnitude or normalize it.

\paragraph{Primitive-level backward rules.}
For $y=x\mathbf W$,
\begin{equation}
\mathbf d x = \mathbf d y\,\mathbf W^\top.
\end{equation}
The matrix is the incoming chunk state $\mathbf W_{\mathrm{start}}$, because the train-view forward and train-view backward are evaluated before the query-view write. For an elementwise activation $a$,
\begin{equation}
\mathbf d x = \mathbf d y \odot a'(x),
\end{equation}
and for root mean square normalization,
\begin{equation}
\sigma(x) = \sqrt{\frac{1}{d}\sum_{i=1}^{d} x_i^2 + \epsilon},
\qquad
y = \frac{x}{\sigma(x)},
\end{equation}
the gradient propagated to the input is
\begin{equation}
\mathbf d x
=
\frac{1}{\sigma(x)}
\left(
\widetilde{\mathbf d y} - \mathrm{mean}(y \odot \widetilde{\mathbf d y})\, y
\right).
\end{equation}
Here $\widetilde{\mathbf d y}=\mathbf w\odot\mathbf d y$ when an affine scale $\mathbf w$ (RMSNorm's $\gamma$) is present, and $\widetilde{\mathbf d y}=\mathbf d y$ otherwise. For merge primitives,
\begin{equation}
\text{if } y=x_1\odot x_2,\quad
\mathbf d x_1 = \mathbf d y \odot x_2,
\qquad
\mathbf d x_2 = \mathbf d y \odot x_1;
\qquad
\text{if } y=x_1+x_2,\quad
\mathbf d x_1=\mathbf d x_2=\mathbf d y.
\end{equation}
These local rules are the analytic backward operators used in the train-view backward without invoking nested automatic differentiation.

\subsection{Chunkwise query update}
\label{app:query_updates}

The Linear query-view rule in Table~\ref{tab:modular_ttt_all} is the chunkwise dual form of the token-wise recurrence. For a single fast linear primitive without decay,
\begin{equation}
\mathbf W_t = \mathbf W_{t-1} - \hat{\mathbf k}_t^\top \mathbf d\hat{\mathbf v}_t,
\qquad
\mathbf o_t = \mathbf q_t \mathbf W_t .
\end{equation}
Telescoping from the incoming chunk state $\mathbf W_{\mathrm{start}} = \mathbf W_0$ gives
\begin{equation}
\mathbf W_t = \mathbf W_{\mathrm{start}} - \sum_{s=1}^{t} \hat{\mathbf k}_s^\top \mathbf d\hat{\mathbf v}_s.
\end{equation}
The readout at position $t$ is therefore
\begin{equation}
\mathbf o_t = \mathbf q_t \mathbf W_t = \mathbf q_t \mathbf W_{\mathrm{start}} - \sum_{s=1}^{t} (\mathbf q_t \hat{\mathbf k}_s^\top)\, \mathbf d\hat{\mathbf v}_s.
\end{equation}
In matrix form,
\begin{equation}
\mathbf O = \mathbf Q \mathbf W_{\mathrm{start}} - \mathrm{Tril}(\mathbf Q \hat{\mathbf K}^\top)\mathbf d\hat{\mathbf V},
\qquad
\mathbf W_{\mathrm{end}} = \mathbf W_{\mathrm{start}} - \hat{\mathbf K}^\top \mathbf d\hat{\mathbf V},
\end{equation}
where $\hat{\mathbf K}$ is the scaled key in Table~\ref{tab:modular_ttt_all}, and $\mathrm{Tril}(\cdot)$ denotes the inclusive lower-triangular operator, retaining entries with $s \le t$ and zeroing entries with $s>t$ to enforce causality within the chunk. 

\subsection{Learning rate, decay, and fast-state variants}
\label{app:lr_decay_fast_states}

Learning-rate and decay factors are predicted by the outer token mixer and injected only at fast nodes. Following Table~\ref{tab:modular_ttt_all}, constant, scalar, and vector learning rates form the scaled key $\hat{\mathbf K}$:
\begin{equation}
\begin{aligned}
\hat{\mathbf K}^{(j)} &= \boldsymbol{\eta}^{(j)} \odot \mathbf K^{(j)}, && \text{for constant, scalar, and vector learning rates}.
\end{aligned}
\end{equation}

In the formulas below, $\hat{\mathbf K}^{\top}\mathbf d\hat{\mathbf V}$ denotes the effective update product after this injection. The gradient $\mathbf d\hat{\mathbf V}$ already carries the scaling factor $1/s$ from Appendix~\ref{app:backward_rules}. When mean scaling is enabled ($s=c$), this normalizes by chunk length; when $s=1$, update scale is controlled by the learned learning-rate factor and its initialization.

When vector decay is enabled, let $\phi_{t,s} = \prod_{r=s+1}^{t} f_r$ and $\phi_{t,0} = \prod_{r=1}^{t} f_r$. This is the vector-decay analogue of the decay matrix $\mathbf M$ in Table~\ref{tab:modular_ttt_all}. The chunkwise readout and end-of-chunk state update become
\begin{align}
\mathbf o_t &= \mathbf q_t \bigl(\phi_{t,0} \odot \mathbf W_{\mathrm{start}}\bigr)
        - \sum_{s=1}^{t} \mathbf q_t\bigl(\phi_{t,s} \odot \hat{\mathbf k}_s^\top\mathbf d\hat{\mathbf v}_s\bigr), \\
\mathbf W_{\mathrm{end}} &= \phi_{c,0} \odot \mathbf W_{\mathrm{start}}
                    - \sum_{s=1}^{c} \phi_{c,s} \odot \hat{\mathbf k}_s^\top \mathbf d\hat{\mathbf v}_s.
\end{align}
The fused fast-update kernel accumulates these products in log space using the supplied $\log f_t$ tensors.

We implement RMSNorm by cascading Norm and Gate. When \texttt{rmsnorm\_use\_fast\_weight} is disabled, the affine scale is {not a fast weight}. When it is enabled, the affine scale is promoted to a fast state and updated by the same causal mechanism as a linear fast node.

It also applies a persistent output gate after the graph output:
\begin{equation}
\mathbf h_{\mathrm{gate}} = \mathrm{Norm}\bigl(\mathbf h_o^{\mathrm{query}}\bigr) \odot \sigma\bigl(\mathrm{Gate}(\mathbf h)\bigr).
\end{equation}
Finally, fast linear nodes use either zero initialization (\texttt{flash}) or the Gaussian initialization of the original TTT recipe \citep{Sun2024TTT} (\texttt{official}). The initialization comparison confirms that zero initialization degrades performance across the tested configurations.

\appsection{Experimental setup}

\subsection{Model architecture}
\label{app:experiment_details}

All models use a pre-normalization residual backbone. Each block applies RMSNorm, then the token mixer, then a residual connection, followed by RMSNorm and a bias-free GLU channel mixer with SiLU activation, following the LLaMA design \citep{Touvron2023Llama2}. The channel mixer has gate, up, and down projections. For Modular TTT, the token mixer uses graph memory with query, key, value, and output projections; the large-scale Modular TTT variants use an L2-normalized key side and leave the query side unnormalized.

\begin{table}[ht]
\centering
\small
\caption{Architecture and Modular TTT hyperparameters. The scale names follow the paper notation. $d_{\mathrm{model}}$ is the hidden dimension, $L$ the number of layers, $H$ the number of heads, $d_{\mathrm{head}}$ the per-head query/key/value dimension, and $d_{\mathrm{ffn}}$ the GLU inner width.}
\label{tab:arch_hyperparams}
\begin{tabular}{lccc}
\toprule
Hyperparameter & 160M & 410M & 1.45B \\
\midrule
\multicolumn{4}{l}{\textit{Backbone}} \\
\midrule
$d_{\mathrm{model}}$ & 768 & 1024 & 2048 \\
$L$ (layers) & 12 & 24 & 24 \\
$H$ (heads) & 6 & 8 & 16 \\
$d_{\mathrm{head}}$ & 128 & 128 & 128 \\
$d_{\mathrm{ffn}}$ (GLU inner) & 2048 & 2816 & 5632 \\
Vocabulary size & 50{,}257 & 50{,}257 & 50{,}257 \\
Normalization & RMSNorm & RMSNorm & RMSNorm \\
Channel mixer & GLU & GLU & GLU \\
Channel activation & SiLU & SiLU & SiLU \\
Linear bias & no & no & no \\
Residual layout & pre-norm & pre-norm & pre-norm \\
\midrule
\multicolumn{4}{l}{\textit{Modular TTT}} \\
\midrule
Chunk size & 256 & 256 & 256 \\
Mean scaling & disabled & disabled & disabled \\
Decay & scalar & scalar & scalar \\
Inner learning rate & scalar & scalar & scalar \\
Key normalization & L2 & L2 & L2 \\
Query normalization & none & none & none \\
Fast-weight init. & Gaussian, std 0.02 & Gaussian, std 0.02 & Gaussian, std 0.02 \\
\bottomrule
\end{tabular}
\end{table}

\paragraph{Initialization.}
Unless otherwise stated, embeddings, the channel-mixer projections, and standard linear layers are initialized from a truncated normal distribution with standard deviation 0.02, with biases set to zero. In Modular TTT token mixers, the query, key, and value projections use Xavier uniform initialization with gain 0.01, while the output projection is zero-initialized by the residual rescaling rule. Fast weights use the official TTT initialization, namely Gaussian initialization with standard deviation 0.02. The scalar decay parameter is initialized so that the initial decay is approximately 0.99, and the small-lr initialization sets the initial inner learning rate to approximately $10^{-3}$.

\subsection{Training details}

All language-model runs are trained on a large-scale English pretraining corpus with the OpenAI GPT-2 BPE tokenizer (\texttt{openai-community/gpt2}; vocabulary size 50{,}257) and the next-token prediction objective. Table~\ref{tab:common_training_hyperparams} lists the shared optimization settings, and Table~\ref{tab:ablation_hyperparams} gives the scale-specific settings.

We use the standard training split of this corpus. Documents are tokenized online, concatenated into a token stream, and packed into fixed-length training sequences of length 2048 for ablation runs and 4096 for scale-up runs. We do not insert an attention mask at document boundaries, so packed sequences may cross document boundaries under the standard causal next-token objective.

\begin{table}[ht]
\centering
\small
\caption{Common training hyperparameters.}
\label{tab:common_training_hyperparams}
\begin{tabular}{ll}
\toprule
Hyperparameter & Value \\
\midrule
Optimizer & AdamW \\
Learning rate & $3\times10^{-4}$ \\
AdamW betas & $(0.9, 0.95)$ \\
AdamW $\epsilon$ & $10^{-8}$ \\
LR schedule & warmup-stable-decay \\
Warmup & 1000 steps \\
Minimum LR ratio & 0.1 \\
Precision & BF16 \\
Gradient clipping & 1.0 \\
Seed & 42 \\
\bottomrule
\end{tabular}
\end{table}

We use FSDP data-parallel sharding over the devices listed in Table~\ref{tab:ablation_hyperparams}. 

\begin{table}[ht]
\centering
\small
\caption{Training hyperparameters. Batch and eval batch are per-device batch sizes. The training token budget is computed as devices $\times$ batch $\times$ accumulation $\times$ context length $\times$ steps: the two ablation settings use $8\times32\times1\times2048\times20{,}000=10.5$B tokens, and the two scale-up settings use $16\times8\times2\times4096\times100{,}000=104.9$B tokens.}
\label{tab:ablation_hyperparams}
\begin{tabular}{lccccccc}
\toprule
Scale & Tokens & Devices & Batch & Accum. & Ctx. & Steps & Eval batch \\
\midrule
\multicolumn{8}{l}{\textit{Ablation}} \\
160M & 10B & 8 & 32 & 1 & 2048 & 20{,}000 & 192 \\
410M & 10B & 8 & 16 & 2 & 2048 & 20{,}000 & 128 \\
\midrule
\multicolumn{8}{l}{\textit{Scale-up}} \\
410M & 100B & 16 & 8 & 2 & 4096 & 100{,}000 & 128 \\
1.45B & 100B & 32 & 4 & 2 & 4096 & 100{,}000 & 64 \\
\bottomrule
\end{tabular}
\end{table}

We run downstream evaluation with \texttt{lm-eval-harness} v0.4.9.1. The evaluation covers perplexity on WikiText-103 \citep{Merity2017WikiText} and LAMBADA \citep{Paperno2016LAMBADA}, multiple-choice accuracy on BoolQ \citep{Clark2019BoolQ}, PIQA \citep{Bisk2020PIQA}, HellaSwag \citep{Zellers2019HellaSwag}, WinoGrande \citep{Sakaguchi2020WinoGrande}, ARC \citep{Clark2018ARC}, OpenBookQA \citep{Mihaylov2018OBQA}, and SIQA \citep{Sap2019SocialIQA}, and containment-style evaluation on SWDE \citep{Hao2011SWDE} and SQuAD \citep{Rajpurkar2016SQuAD}. Table~\ref{tab:eval_tasks} gives the exact harness task names and metrics.

\begin{table}[ht]
\centering
\caption{Downstream evaluation tasks. The table lists the exact \texttt{lm-eval-harness} task names and metrics used in Table~\ref{tab:large_scale_eval}. Metrics marked \texttt{acc\_norm} use length-normalized multiple-choice scoring; metrics marked \texttt{acc} use unnormalized accuracy. Group averages are arithmetic means over the listed metrics within each group.}
\label{tab:eval_tasks}
\small
\begin{tabular}{llll}
\toprule
Group & Reported as & \texttt{lm-eval-harness} task & Metric \\
\midrule
Perplexity & Wiki. & \texttt{wikitext} & \texttt{word\_perplexity} \\
Perplexity & LMB. & \texttt{lambada\_openai} & \texttt{perplexity} \\
\midrule
Multiple-choice & BoolQ & \texttt{boolq} & \texttt{acc} \\
Multiple-choice & PIQA & \texttt{piqa} & \texttt{acc} \\
Multiple-choice & Hella. & \texttt{hellaswag} & \texttt{acc\_norm} \\
Multiple-choice & Wino. & \texttt{winogrande} & \texttt{acc} \\
Multiple-choice & ARC-e & \texttt{arc\_easy} & \texttt{acc} \\
Multiple-choice & ARC-c & \texttt{arc\_challenge} & \texttt{acc\_norm} \\
Multiple-choice & OBQA & \texttt{openbookqa} & \texttt{acc\_norm} \\
Multiple-choice & SIQA & \texttt{social\_iqa} & \texttt{acc} \\
\midrule
Containment & SWDE & \texttt{swde} & \texttt{contains} \\
Containment & SQuAD & \texttt{squad\_completion} & \texttt{contains} \\
Containment & FDA & \texttt{fda} & \texttt{contains} \\
\bottomrule
\end{tabular}
\end{table}

For throughput comparisons, we report training tokens per second measured at the context length used by the corresponding run: 2048 for ablation and throughput comparisons and 4096 for scale-up runs. Throughput is averaged over 100 measured steps after a 10-step warm-up. Peak memory is the maximum reserved device memory reported by the training logger during a training step, after optimizer states have been initialized. Downstream evaluation uses the tasks and metrics listed in Table~\ref{tab:eval_tasks}.

\appsection{Analysis of Modular TTT Updates}
\label{app:theory_support}
\label{app:shallow_theory}

We analyze one chunk update under the output-gradient rules in Appendix~\ref{app:backward_rules}. The analysis tracks how each modular choice changes the scale, direction, or conditioning of the fast-weight write.

The order follows the main-text ablations. We first study update scale and small-lr init. We then compare loss functions, decay rules, shallow nonlinear modules, normalization, and multilayer fast learners.

The section stays with the operating point used in the main text; it does not introduce a separate one. When mean scaling appears below, it refers to the option $s=c$ in Table~\ref{tab:modular_ttt_all}. Equivalently, this is the \code{mean\_loss} option that divides the chunk-level output gradient by the chunk size.

\subsection{Update scale}
\label{app:update_scale}

\begin{proposition}[Chunk update scale can grow with chunk size]
\label{prop:mean_scaling}
Consider one linear fast primitive over a chunk. The train-view gradient is evaluated at the incoming fast weight $\mW_0$. The per-token update is
\begin{equation}
\Delta\mW_t = \eta\,\vk_t^\top\vd_t,
\qquad
\vd_t = \frac{\partial l}{\partial \hat{\vv}_t}/s,
\qquad
\hat{\vv}_t = \vk_t\mW_0 .
\end{equation}
Let $\Delta\mW$ denote the update subtracted from the incoming fast weight over one chunk, then $\|\Delta  W\|_F = O(\eta c / s) $.
\end{proposition}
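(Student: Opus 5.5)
The plan is to write the chunk update as a sum of per-token outer products and apply the triangle inequality, with boundedness assumptions made explicit. By the Linear rule in Table~\ref{tab:modular_ttt_all} and the telescoping identity of Appendix~\ref{app:query_updates}, without decay the update subtracted from the incoming fast weight over one chunk is
\begin{equation}
\Delta\mW = \sum_{t=1}^{c} \Delta\mW_t = \eta \sum_{t=1}^{c} \vk_t^\top \vd_t = \eta\,\rmK^\top \rmD,
\end{equation}
where the rows of $\rmD$ are $\vd_t$. The key structural point is that all $\vd_t$ are evaluated at the same incoming weight $\mW_0$, since $\hat{\vv}_t=\vk_t\mW_0$. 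The individual terms therefore do not interact through intermediate weights, and the chunk update is a plain sum of $c$ rank-one terms.

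Next I bound each term. For an outer product, $\|\vk_t^\top\vd_t\|_F = \|\vk_t\|_2\|\vd_t\|_2$. The keys are L2-normalized in our setting (Table~\ref{tab:arch_hyperparams}), so $\|\vk_t\|_2\le 1$; more generally I would assume $\|\vk_t\|_2\le B_k$. For the per-token output gradient $\vd_t = g_t/s$ with $g_t = \partial l/\partial\hat{\vv}_t$ the unnormalized gradient, I would go through the rules in Appendix~\ref{app:backward_rules}:
\begin{itemize}[noitemsep, topsep=1pt]
    \item Inner product: $g_t=-\vv_t$.
    \item MSE: $g_t=\vk_t\mW_0-\vv_t$, bounded by $B_k\|\mW_0\|_2+\|\vv_t\|$.
    \item L1: $g_t=\mathrm{sign}(\cdot)$, bounded by $\sqrt d$.
\end{itemize}
Each of these is bounded by a constant $G$ independent of $c$, given bounded values and bounded incoming state. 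Hence $\|\vd_t\|_2\le G/s$.

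Combining the two bounds gives
\begin{equation}
\|\Delta\mW\|_F \le \eta\sum_{t=1}^c B_k G/s = B_kG\,\eta c/s = O(\eta c/s).
\end{equation}
With a token-dependent rate $\eta_t$, I would replace $\eta$ by $\max_t\eta_t$. With scalar decay, the factors $\phi_{c,s}\le 1$ only shrink each term, so the bound still holds.

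The main obstacle is the loss-dependent constant $G$, since the $O(\cdot)$ hides the requirement that $G$ not depend on $c$. RMSE is the exception: its gradient is normalized by the chunk-level residual $\|\hat{\rmV}-\rmV\|_F$, which itself scales like $\sqrt c$. For RMSE I would instead bound the whole sum directly via $\|\rmK^\top\rmD\|_F\le\|\rmK\|_2\|\rmD\|_F\le\|\rmK\|_2/\sqrt s$. This gives an even smaller, still-$O(\eta c/s)$ bound, so the stated upper bound holds uniformly across all four losses.
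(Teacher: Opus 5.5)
Your proof is correct and follows the same route as the paper. You write $\Delta\mW=\eta\sum_{t=1}^{c}\vk_t^\top\vd_t$ as a sum of $c$ rank-one terms, all evaluated at $\mW_0$, then apply the triangle inequality and bound each term uniformly; the paper does this in one line via $c\max_t\|\vk_t\|_2\|\vd_t\|_2$. Your explicit constants $B_k$ and $G$, and your separate treatment of RMSE through $\|\rmD\|_F=1/\sqrt{s}$, spell out the assumption the paper leaves implicit, namely that this maximum is $O(1)$ independently of $c$ (for RMSE a uniform per-token bound would only give $O(\eta c/\sqrt{s})$).
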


\begin{proof}
Since
\begin{equation}
\Delta\mW = \eta/s\sum_{t=1}^{c}\vk_t^\top\vd_t,
\end{equation}
Then 
\begin{equation}
\|\Delta\mW\|_F \;\leq\; \eta/s\sum_{t=1}^{c}\|\vk_t\|_2\|\vd_t\|_2 \le \eta c / s \max\{\|\vk_t\|_2\|\vd_t\|_2 \}
= O(\eta c / s) .
\end{equation}

\end{proof}

\begin{remark}[Small-lr init as scale control]
When $s=1$, the output gradient is not divided. Instead, small-lr init sets $\eta_{\mathrm{init}}=\eta_0$ so that $c\eta_0=O(1)$. With the main-text chunk size $c=256$ and $\eta_0=10^{-3}$, the product is $c\eta_0=0.256$. Thus the initial chunk update has controlled scale even without mean scaling.

Mean scaling and small-lr init control the same initial quantity, but they differ after outer-loop training begins: mean scaling keeps the fixed factor $1/c$ in every output gradient, while small-lr init only sets the starting value of the learned learning-rate predictor.
\end{remark}

\subsection{Small-lr init (\texorpdfstring{$\eta_0 = 10^{-3}$}{eta0 = 1e-3})}
\label{app:sinlr}

The learning rate $\eta$ in Modular TTT is not a fixed scalar. It is predicted from the outer representation. Following the main text, we write it as
\begin{equation}
\eta_t = 2\,\sigma(\beta_t+b), \qquad \beta_t = \mathbf x_t^\top\mathbf w,
\end{equation}
where $\sigma$ is the sigmoid function and $b$ is a fixed scalar offset.

Without any offset ($b=0$), the initial prediction gives $\eta_t\approx 1.0$. This coefficient has the right order of magnitude for standard linear-attention-style accumulation, but it is too large for TTT updates without mean scaling. By Proposition~\ref{prop:mean_scaling}, updates without mean scaling and with $\eta\sim 1$ have $O(c)$ initial chunk scale. This agrees with the main-text observation that standard init can be unstable and is empirically worse than small-lr init.

\begin{proposition}[Small learning rates control the MSE update spectrum]
\label{prop:small_lr_spectrum}
Consider the MSE update without mean scaling used in the main-text stability discussion:
\begin{equation}
\mW^{+}
=
\mW - \rmK^\top \mathrm{diag}(\boldsymbol{\eta})(\rmK\mW-\rmV)
=
\rmA\mW + \rmK^\top \mathrm{diag}(\boldsymbol{\eta})\rmV,
\end{equation}
where
\begin{equation}
\rmA = \rmI - \rmH,
\qquad
\rmH = \rmK^\top \mathrm{diag}(\boldsymbol{\eta})\rmK.
\end{equation}
If $\lambda_{\max}(\rmH)>2$, then the homogeneous update $\mW\mapsto \rmA\mW$ amplifies some direction. If $0\le \eta_t\le \eta_0$ for all tokens, then
\begin{equation}
\lambda_{\max}(\rmH)\le \eta_0\|\rmK\|_2^2.
\end{equation}
Thus a small initial learning rate reduces the spectral scale of $\rmH$ and lowers the risk that $\rmA$ has eigenvalues with magnitude greater than one.
\end{proposition}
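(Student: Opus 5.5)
The plan is to split the proposition into its two claims and prove each by elementary spectral arguments, using only that $\rmH=\rmK^\top \mathrm{diag}(\boldsymbol{\eta})\rmK$ is symmetric. \emph{I will assume throughout that $\eta_t\ge 0$.} This holds for the parameterization $\eta_t=2\sigma(\beta_t+b)$ of Appendix~\ref{app:sinlr}, and it makes $\rmH$ positive semidefinite, since $\mathbf x^\top\rmH\mathbf x=\sum_t \eta_t(\vk_t\mathbf x)^2\ge 0$. Then $\rmH$ is orthogonally diagonalizable with real eigenvalues $0\le\lambda_1\le\dots\le\lambda_d=\lambda_{\max}(\rmH)$. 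Consequently $\rmA=\rmI-\rmH$ is symmetric with the same eigenvectors and eigenvalues $1-\lambda_i$.

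\textbf{Amplification claim.} Let $\mathbf u$ be a unit eigenvector of $\rmH$ for $\lambda_{\max}(\rmH)$. If $\lambda_{\max}(\rmH)>2$, then $|1-\lambda_{\max}(\rmH)|=\lambda_{\max}(\rmH)-1>1$.

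I will exhibit an amplified direction for the homogeneous map $\mW\mapsto\rmA\mW$. Take $\mW=\mathbf u\mathbf a^\top$ for any nonzero $\mathbf a$. Then $\rmA\mW=(1-\lambda_{\max})\mathbf u\mathbf a^\top$, so the Frobenius norm is multiplied by a factor exceeding one. Equivalently, the linear operator on matrices has spectral radius $\rho(\rmA)>1$, and iterating the map (with $\rmH$ held fixed) grows this component geometrically. No further obstacle arises here once symmetry is noted.

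\textbf{Spectral bound.} Use the Rayleigh quotient. For a unit vector $\mathbf x$,
\begin{equation}
\mathbf x^\top\rmH\mathbf x=\sum_t \eta_t(\vk_t\mathbf x)^2\le \eta_0\sum_t(\vk_t\mathbf x)^2=\eta_0\|\rmK\mathbf x\|_2^2\le\eta_0\|\rmK\|_2^2 .
\end{equation}
Taking the supremum over $\mathbf x$ gives $\lambda_{\max}(\rmH)\le\eta_0\|\rmK\|_2^2$. Equivalently, in Loewner order, $\rmH\preceq\eta_0\rmK^\top\rmK$.

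\textbf{Concluding statement.} Combine the two parts. Whenever $\eta_0\|\rmK\|_2^2\le 2$, all eigenvalues of $\rmA$ lie in $[-1,1]$, so no amplification occurs. Smaller $\eta_0$ shrinks this interval toward $1$.

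To connect with the chunk scale discussion, I will note that $\|\rmK\|_2^2\le\|\rmK\|_F^2=\sum_t\|\vk_t\|_2^2$. With the L2-normalized keys of the main configuration this equals $c$. The condition therefore becomes $c\eta_0\le 2$, matching the remark after Proposition~\ref{prop:mean_scaling}, where $c\eta_0=0.256$.

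The only step needing care is the final sentence of the statement, which speaks only of "lowering the risk." I will phrase it as this sufficient condition rather than claim anything stronger, and I will state the nonnegativity assumption on $\eta_t$ explicitly.
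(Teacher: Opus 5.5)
Your proposal is correct and follows the paper's proof essentially step for step. Nonnegativity of $\eta_t$ makes $\rmH$ positive semidefinite, and the eigenvalues $1-\lambda_i(\rmH)$ of $\rmA$ give amplification once $\lambda_{\max}(\rmH)>2$. Your Rayleigh-quotient bound is the paper's Loewner-order step $\mathrm{diag}(\boldsymbol{\eta})\preceq\eta_0\rmI \Rightarrow \rmH\preceq\eta_0\rmK^\top\rmK$ written out explicitly. Your closing sufficient condition $\eta_0\|\rmK\|_2^2\le 2$ (hence $c\eta_0\le 2$ for L2-normalized keys) is a valid, slightly more quantitative version of the paper's qualitative ``lowers the risk'' conclusion.
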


\begin{proof}
\textbf{Step 1: Identify the homogeneous update.}
The MSE update is affine in $\mW$. Its homogeneous part is multiplication by
\begin{equation}
\rmA = \rmI - \rmK^\top \mathrm{diag}(\boldsymbol{\eta})\rmK.
\end{equation}
This is the matrix that appears in the main-text stability discussion.

\textbf{Step 2: Relate eigenvalues of $\rmA$ and $\rmH$.}
Because $\eta_t\ge 0$, the matrix
\begin{equation}
\rmH=\rmK^\top \mathrm{diag}(\boldsymbol{\eta})\rmK
\end{equation}
is positive semidefinite. Thus all eigenvalues of $\rmH$ are nonnegative, so an eigenvalue of $\rmA=\rmI-\rmH$ can leave the unit disk only through the negative side, which occurs exactly when some $\lambda_i(\rmH)>2$. If $\lambda_i(\rmH)$ is an eigenvalue of $\rmH$, then $1-\lambda_i(\rmH)$ is the corresponding eigenvalue of $\rmA$. Therefore, if $\lambda_{\max}(\rmH)>2$, then
\begin{equation}
|1-\lambda_{\max}(\rmH)|>1,
\end{equation}
so the homogeneous update amplifies the associated eigendirection.

\textbf{Step 3: Bound the spectrum by the learning rate.}
If $0\le \eta_t\le \eta_0$, then $\mathrm{diag}(\boldsymbol{\eta})\preceq \eta_0 \rmI$. Hence
\begin{equation}
\rmH
=
\rmK^\top \mathrm{diag}(\boldsymbol{\eta})\rmK
\preceq
\eta_0 \rmK^\top\rmK,
\end{equation}
which gives
\begin{equation}
\lambda_{\max}(\rmH)
\le
\eta_0\lambda_{\max}(\rmK^\top\rmK)
=
\eta_0\|\rmK\|_2^2.
\end{equation}
Small-lr init therefore lowers the initial spectral scale of $\rmH$, making it less likely that $\rmA$ has eigenvalues with magnitude greater than one. This is the spectral counterpart of the chunk-scale control in Proposition~\ref{prop:mean_scaling}.
\end{proof}

\begin{corollary}[Mean scaling and small $\eta_0$ control the initial update]
Proposition~\ref{prop:mean_scaling} and Proposition~\ref{prop:small_lr_spectrum} describe two related effects. At the start of outer-loop training, mean scaling with $\eta=O(1)$ and updates without mean scaling but with $\eta_0=O(1/c)$ both produce an $O(1)$ chunk update scale.

For MSE, small $\eta_0$ also reduces the spectral scale of $\rmK^\top\mathrm{diag}(\boldsymbol{\eta})\rmK$. This lowers the risk that the homogeneous update matrix has eigenvalues with magnitude greater than one. After training, mean scaling continues to divide every output gradient by $c$. In contrast, small $\eta_0$ only fixes the starting value, so the predicted learning rate can grow or shrink per token as the outer representation learns. This matches the main-text setting, which uses small-lr init without mean scaling.
\end{corollary}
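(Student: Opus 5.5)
The corollary is essentially a combination of Proposition~\ref{prop:mean_scaling} and Proposition~\ref{prop:small_lr_spectrum}. I would prove its two assertions separately and then add a short interpretive remark about what happens after outer-loop training.

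\textbf{Scale claim.} I would invoke Proposition~\ref{prop:mean_scaling}, which gives $\|\Delta\mW\|_F = O(\eta c/s)$. The implicit constant is $\max_t \|\vk_t\|_2\|\vd_t\|_2$, where $\vd_t$ is taken before division by $s$. It has to be treated as $O(1)$ at initialization, and I would justify this with two facts:
\begin{itemize}[noitemsep, topsep=1pt]
    \item keys are L2-normalized, so $\|\vk_t\|_2 = 1$ (Appendix~\ref{app:experiment_details});
    \item the unscaled output gradients $\partial l/\partial\hat{\vv}_t$ are bounded at initialization for MSE and inner product, because fast weights start with small Gaussian entries and the value projections are small.
\end{itemize}
The two cases then follow directly:
\begin{itemize}[noitemsep, topsep=1pt]
    \item with mean scaling ($s=c$) and $\eta=O(1)$, the bound is $O(\eta)=O(1)$;
    \item without mean scaling ($s=1$) and $\eta_t\le\eta_0=O(1/c)$, the bound is $O(\eta_0 c)=O(1)$, e.g.\ $c\eta_0=0.256$ as in the preceding remark.
\end{itemize}
Since the learning rate is per token, I would first restate Proposition~\ref{prop:mean_scaling} with $\eta_t$ inside the sum and bound the sum by $\max_t \eta_t \le \eta_0$. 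This is a trivial modification.

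\textbf{Spectral claim.} For MSE, I would apply Step~3 of Proposition~\ref{prop:small_lr_spectrum}: $\lambda_{\max}(\rmH)\le\eta_0\|\rmK\|_2^2$. Because the chunk has $c$ rows of unit norm, $\|\rmK\|_2^2\le\|\rmK\|_F^2 = c$. Hence
\[
\lambda_{\max}(\rmH)\le \eta_0 c = 0.256 < 2,
\]
and by Step~2 every eigenvalue of $\rmA$ lies in $[-1,1]$. This makes ``lowers the risk'' precise: at initialization the risk is in fact eliminated. With mean scaling, the analogous bound is $\eta\,\|\rmK\|_2^2/c \le \eta$.

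\textbf{After training.} The final claim, that mean scaling keeps a fixed $1/c$ factor while small $\eta_0$ only fixes the starting value, is definitional. It follows from the parameterization $\eta_t = 2\sigma(\beta_t+b)$: the weight $\mathbf w$ in $\beta_t$ is learned, so $\eta_t$ ranges over $(0,2)$ after training, whereas $s$ is a constant.

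The main obstacle is that the corollary is stated informally. The hidden constant in the ``$O(1)$'' claims depends on bounding $\|\vd_t\|$ at initialization. I would handle this by stating the assumption explicitly, namely bounded key and gradient norms at initialization, rather than trying to derive it from the initialization distributions.
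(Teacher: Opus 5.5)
Your proposal is correct and follows essentially the same route as the paper: substitute $s=c,\ \eta=O(1)$ and $s=1,\ \eta_0=O(1/c)$ into the $O(\eta c/s)$ bound of Proposition~\ref{prop:mean_scaling}, take the spectral claim from Step~3 of Proposition~\ref{prop:small_lr_spectrum}, and treat the post-training contrast as definitional from $\eta_t=2\sigma(\beta_t+b)$. Your additions are valid sharpenings rather than a different argument: you state the bounded-key/gradient assumption hidden in the $O(\cdot)$ constant, and you use the L2-normalized keys to get $\lambda_{\max}(\rmH)\le\eta_0\|\rmK\|_F^2=c\eta_0=0.256<2$, which turns ``lowers the risk'' into no amplification at initialization (under the idealization $\eta_t\le\eta_0$).
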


Small-lr init sets $b$ to a negative value chosen so that the initial learning rate is $\eta_0$. Concretely,
\begin{equation}
b = \log\!\left(\frac{p}{1-p}\right), \qquad p = \frac{\eta_0}{2},
\end{equation}
so that $2\sigma(b)=\eta_0$. With $\eta_0=10^{-3}$, this gives $p=5\times10^{-4}$ and $b\approx -7.60$.

Small-lr init has two effects. First, it gives an $O(1)$ effective per-chunk update at initialization for the same reason as mean scaling. Both reduce the initial write scale by a factor proportional to $1/c$. With $\eta_{\mathrm{init}}=10^{-3}$ and $c=256$, the scale factor is $c\cdot10^{-3}=0.256$.

Second, small $\eta_0$ does not lock the learning rate to this value. The learned projection can adjust $\eta_t$ during outer-loop training. In contrast, mean scaling is a fixed division by $c$. This distinction matches the main-text choice of small-lr init without mean scaling, while Appendix~\ref{app:mean_loss_decay_support} reports runs with mean scaling as a complementary regime.

\subsection{Loss analysis}
\label{app:loss_theory}

The loss ablation separates the four losses into two groups. MSE and inner-product loss are competitive, while L1 and RMSE are weaker. The output-gradient formulas in Appendix~\ref{app:backward_rules} show a simple mechanism behind this split: the losses differ in how much scale information their output gradients carry.

\begin{proposition}[L1 and RMSE remove residual scale]
Let $\rmR=\hat{\rmV}-\rmV$ be a nonzero residual. For any $a>0$, L1 gives the same sign pattern for $\rmR$ and $a\rmR$, while RMSE gives the same normalized gradient direction for $\rmR$ and $a\rmR$. Thus both losses remove the magnitude information carried by the residual.
\end{proposition}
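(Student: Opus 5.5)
The plan is to substitute the scaled residual $a\rmR$ directly into the output-gradient rules recorded in Appendix~\ref{app:backward_rules} and check that the factor $a$ cancels in each case. For L1, the output gradient is $\mathbf d\hat{\mathbf V} = \mathrm{sign}(\hat{\mathbf V}-\mathbf V)/s = \mathrm{sign}(\rmR)/s$. Since $a>0$, the sign function is invariant under positive scaling entrywise: $\mathrm{sign}(aR_{ij})=\mathrm{sign}(R_{ij})$, including entries with $R_{ij}=0$. Hence $\mathrm{sign}(a\rmR)=\mathrm{sign}(\rmR)$, and the L1 output gradient is literally identical for $\rmR$ and $a\rmR$.

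For RMSE, the output gradient is
\begin{equation}
\mathbf d\hat{\mathbf V}=\frac{\rmR}{\sqrt{s\,\|\rmR\|_F^2}}=\frac{\rmR}{\sqrt{s}\,\|\rmR\|_F},
\end{equation}
which is well defined because $\rmR\neq 0$. Replacing $\rmR$ by $a\rmR$ gives $a\rmR/(\sqrt{s}\,a\|\rmR\|_F)$, using the homogeneity $\|a\rmR\|_F=a\|\rmR\|_F$ for $a>0$. The factor $a$ therefore cancels, and the gradient is unchanged. I would also note that $\|\mathbf d\hat{\mathbf V}\|_F=1/\sqrt{s}$ for every nonzero residual, so the gradient has a fixed Frobenius norm regardless of how large the residual is.

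To make ``remove magnitude information'' precise, I would contrast these with MSE, whose gradient $a\rmR/s$ scales linearly in $a$. Then, via the update formula $\Delta\mW=\hat{\rmK}^\top\mathbf d\hat{\mathbf V}$ in the main text, the fast-weight write under L1 or RMSE is the same for $\rmR$ and $a\rmR$, whereas under MSE it scales with $a$.

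No step is a real obstacle. The only points needing care are the zero entries of $\rmR$ for L1, which are handled because $\mathrm{sign}(0)=0$, and the nonzero assumption for RMSE, which prevents division by zero.
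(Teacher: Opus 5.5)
Your proposal is correct and matches the paper's proof: you substitute $a\rmR$ into the L1 and RMSE output-gradient rules, and $a$ cancels by positive homogeneity of $\mathrm{sign}$ and of $\|\cdot\|_F$. Your handling of zero entries via the convention $\mathrm{sign}(0)=0$ is slightly more complete than the paper's ``away from zero entries'' caveat, and the fixed norm $1/\sqrt{s}$ and the MSE contrast are consistent additions.
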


\begin{proof}
For L1,
\begin{equation}
\mathrm{sign}(a\rmR)=\mathrm{sign}(\rmR)
\end{equation}
for any $a>0$ away from zero entries. The gradient depends on the sign pattern but not on the size of the error.

For RMSE,
\begin{equation}
\frac{a\rmR}{\sqrt{s\|a\rmR\|_F^2}}
=
\frac{\rmR}{\sqrt{s\|\rmR\|_F^2}},
\end{equation}
again for $a>0$ and $\rmR\ne 0$. The gradient direction is normalized, so its norm is decoupled from the reconstruction error.
\end{proof}

This mechanism matches the main-text split. MSE and inner product preserve a scale-bearing write signal, whereas L1 and RMSE collapse or normalize that scale. The latter two losses are useful controls, but they are not the default choices in the experiments.

\subsection{Multilayer fast learners}
\label{app:deep_theory}

This subsection supports the main-text deep-memory ablation. The central issue is not only model expressivity. In the one-step Modular TTT setting, depth changes the geometry of the inner update.

The zero-initialization arguments below are scope statements about a boundary case. They explain why the framework should not use all-zero fast factors for deeper product-form learners. The main experiments in Table~\ref{tab:multilayer_90m} use the Gaussian fast-weight initialization described in Appendix~\ref{app:experiment_details}, so their failures should be read as failures under nonzero initialization and one-step Modular TTT updates, not as direct consequences of the zero-init trap.

The shallow linear learner has a simple update:
\begin{equation}
\hat{\rmV}=\rmK\mW,
\qquad
\Delta\mW=\rmK^\top\rmD.
\end{equation}
Under MSE, this is the update of a convex quadratic in one fast weight. A deeper fast learner no longer has this geometry. Its factor updates depend on activations produced by upstream factors and gradients backpropagated through downstream factors.

We use this factor-coupling view throughout the subsection. In this reduced appendix version, we keep the zero-initialization trap as a clean boundary case: when all fast factors are initialized at zero, product-form depth can prevent any inner update from being written. This is not the mechanism behind the Gaussian-initialized main experiments, but it illustrates that depth changes the update geometry in ways absent from a single fast weight.

These product-form effects are closely related to classical analyses of deep linear networks and deep matrix factorization \citep{BaldiHornik1989PCA,Kawaguchi2016NoPoorLocalMinima,Arora2018ImplicitAcceleration,Arora2019DeepMatrixFactorization}.

\subsubsection{Core update identity}

Let $\rmK\in\mathbb{R}^{c\times d}$ be the chunk key matrix, $\rmV\in\mathbb{R}^{c\times e}$ be the target, and let $\rmD$ denote the effective chunk-normalized signal used by the TTT update. For a depth-$L$ product-form linear learner, the train-view output is
\begin{equation}
\hat{\rmV}=\rmK\mW_1\mW_2\cdots\mW_L .
\end{equation}
For $L=2$, we write $\mW_1\in\mathbb{R}^{d\times m}$ and $\mW_2\in\mathbb{R}^{m\times e}$.

\begin{proposition}[Product-form updates are factor-coupled]
\label{prop:factor_coupling}
For the depth-$L$ product-form learner, define
\begin{equation}
\rmA_{j-1}=\rmK\mW_1\cdots\mW_{j-1},
\qquad
\rmB_{j+1}=\mW_{j+1}\cdots\mW_L,
\end{equation}
with $\rmA_0=\rmK$ and $\rmB_{L+1}=\rmI$. The train-view update of the $j$-th factor has the form
\begin{equation}
\Delta\mW_j=\rmA_{j-1}^\top \rmD\,\rmB_{j+1}^\top .
\end{equation}
Thus the update of each factor is conditioned by the other factors on its forward-backward path.
\end{proposition}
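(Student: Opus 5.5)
The plan is to run the train-view backward of Algorithm~\ref{alg:modular_ttt} on the chain graph $\rmK \to \mW_1 \to \cdots \to \mW_L$, using only the Linear rule from Table~\ref{tab:modular_ttt_all}. That rule states that for a node $\hat{\mathbf V}=\mathbf X\mW$ with incoming signal $\mathbf d\hat{\mathbf V}$, the parent signal is $\mathbf d\mathbf X=\mathbf d\hat{\mathbf V}\mW^\top$ and the weight update is $\mathbf X^\top\mathbf d\hat{\mathbf V}$. The learning rate $\boldsymbol\eta$ and the $1/s$ factor can be absorbed into $\rmD$, which the statement calls the effective chunk-normalized signal. So everything reduces to identifying, for each factor $j$, the input activation $\mathbf X$ and the incoming signal $\mathbf d\hat{\mathbf V}$.

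\textbf{Forward activations.} I will show that the input to node $j$ in the train-view forward is $\rmA_{j-1}=\rmK\mW_1\cdots\mW_{j-1}$. This follows by induction along the topological order: the base case is $\rmA_0=\rmK$, and the step is $\rmA_j=\rmA_{j-1}\mW_j$. The paper stresses that all $\mW_i$ here are the incoming chunk states $\mW_{\mathrm{start}}$, because the train-view passes precede any write. The identity is therefore purely algebraic, with no sequential-update subtlety.

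\textbf{Backward signals.} Next I show that the signal arriving at the output of node $j$ is $\rmD\,\rmB_{j+1}^\top$. This is a reverse induction starting from $j=L$, where the signal is $\rmD$ and $\rmB_{L+1}=\rmI$. For the step, the Linear rule sends $\rmD\rmB_{j+1}^\top\mW_j^\top$ back to node $j-1$. This equals $\rmD(\mW_j\rmB_{j+1})^\top=\rmD\rmB_j^\top$, which is exactly the transpose-of-product identity. Because the graph is a chain, each node has exactly one child, so the accumulation $\bar{\mathbf h}_i\gets\bar{\mathbf h}_i+\boldsymbol\delta$ in Algorithm~\ref{alg:modular_ttt} receives a single term.

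\textbf{Conclusion.} Plugging both inductions into the weight-update rule gives $\Delta\mW_j=\rmA_{j-1}^\top\rmD\rmB_{j+1}^\top$. As a sanity check, the case $L=2$ should reproduce the identities $\Delta^{(1)}=\Delta\mW_2^\top$ and $\Delta^{(2)}=\mW_1^\top\Delta$ from the deep-memory box in the main text. The only delicate point is bookkeeping, namely where $\boldsymbol\eta$ enters: the Linear rule scales the key $\hat{\mathbf K}=\mathbf K\odot\boldsymbol\eta$ rather than the gradient. I would handle this by stating that, for a scalar or token-wise $\eta$ that is shared across factors, the row scaling commutes to the left of $\rmD$, so it can be folded into $\rmD$.
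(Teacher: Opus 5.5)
Your argument is correct, but it takes a different route from the paper's.

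The paper never runs Algorithm~\ref{alg:modular_ttt}. It freezes every factor except $\mW_j$ and notes that a perturbation gives $\delta\hat{\rmV}=\rmA_{j-1}\,\delta\mW_j\,\rmB_{j+1}$. It then reads off the update from the Frobenius adjoint of $\delta\mW\mapsto\rmA_{j-1}\,\delta\mW\,\rmB_{j+1}$, namely $\langle \rmD,\rmA_{j-1}\delta\mW_j\rmB_{j+1}\rangle=\langle \rmA_{j-1}^\top\rmD\,\rmB_{j+1}^\top,\delta\mW_j\rangle$. That is one line, with no induction.

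You instead compose the Table's local Linear rule along the chain. A forward induction gives the inputs $\rmA_{j-1}$, and a reverse induction gives the signals $\rmD\,\rmB_{j+1}^\top$.

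What each route buys:
\begin{itemize}
\item The paper's version is shorter. It shows directly that the update is the chain-rule gradient with respect to $\mW_j$, up to the scaling absorbed into $\rmD$, independent of any execution machinery.
\item Yours takes as given that the local Linear rule is correct, which is essentially the $L=1$ instance of the same identity. In return, it shows that the framework's own three-pass composition produces this update with no topology-specific derivation.
\item Yours also makes explicit details the paper's proof suppresses: the factors are the incoming states $\mW_{\mathrm{start}}$, each chain node accumulates a single term, and $\boldsymbol\eta$ enters only at the weight write, not in the propagated signal.
\end{itemize}

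Your learning-rate caveat is apt, because the appendix allows node-specific rates $\boldsymbol\eta^{(j)}$. For token-wise rates one actually gets $\Delta\mW_j=\rmA_{j-1}^\top\mathrm{diag}(\boldsymbol\eta^{(j)})\rmD\,\rmB_{j+1}^\top$. A single $\rmD$ is therefore justified only when this row scaling is shared across factors. A feature-dependent rate would instead modify $\rmA_{j-1}$ entrywise and would not fold into $\rmD$ at all. The paper's single-$\rmD$ formulation quietly glosses over this point.

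Finally, your inductive bookkeeping extends more naturally than the perturbation argument to chains with interleaved Act nodes. There the propagated signal simply picks up elementwise $f'$ factors, which the closed-form $\rmA^\top\rmD\,\rmB^\top$ expression no longer captures.
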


\begin{proof}
\textbf{Step 1: Perturb one factor.}
Holding all other factors fixed, a perturbation $\delta\mW_j$ changes the output by
\begin{equation}
\delta\hat{\rmV}
=
\rmA_{j-1}\,\delta\mW_j\,\rmB_{j+1}.
\end{equation}

\textbf{Step 2: Pair with the gradient.}
Using the Frobenius inner product,
\begin{equation}
\langle \rmD,\delta\hat{\rmV}\rangle
=
\langle
\rmA_{j-1}^\top \rmD\,\rmB_{j+1}^\top,
\delta\mW_j
\rangle .
\end{equation}
Hence the update is $\Delta\mW_j=\rmA_{j-1}^\top \rmD\,\rmB_{j+1}^\top$.

\textbf{Step 3: Compare with one layer.}
When $L=1$, the formula reduces to $\Delta\mW=\rmK^\top\rmD$. For $L\ge2$, the update contains upstream activations, downstream factors, or both. This is the factor coupling absent from the shallow linear learner.
\end{proof}

\subsubsection{A boundary case: zero initialization}
\label{app:linear_linear_analysis}

The factor-coupled identity exposes a clean boundary case that separates shallow and product-form fast learners: all-zero initialization. This case is useful because it shows that depth can change whether the TTT update is written at all.

\begin{proposition}[Zero initialization gives zero gradients in product-form linear learners]
\label{prop:zero_init_trap}
Consider a product-form linear fast learner with depth $L\ge 2$. If all fast factors are initialized at zero at sequence start, then the train-view gradient of every fast factor is exactly zero. Since the update is zero, the zero state persists across chunks unless it is externally perturbed.

By contrast, a single linear fast learner with $\mW=0$ has update $\Delta\mW=\rmK^\top\rmD$, which is nonzero whenever $\rmK$ and $\rmD$ are nonzero.
\end{proposition}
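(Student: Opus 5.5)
The plan is to obtain the statement directly from the factor-coupling identity of Proposition~\ref{prop:factor_coupling}, and then close the argument by induction over chunks.

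First, fix a chunk whose incoming fast factors satisfy $\mW_1=\cdots=\mW_L=0$, with $L\ge 2$. By Proposition~\ref{prop:factor_coupling}, the train-view update of factor $j$ is $\Delta\mW_j=\rmA_{j-1}^\top\rmD\,\rmB_{j+1}^\top$, where $\rmA_{j-1}=\rmK\mW_1\cdots\mW_{j-1}$ and $\rmB_{j+1}=\mW_{j+1}\cdots\mW_L$. I will split into two cases on the position of $j$.

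\textbf{Case $j\ge 2$.} Here $\rmA_{j-1}$ contains the factor $\mW_1=0$, so $\rmA_{j-1}=0$.

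\textbf{Case $j=1$.} Since $L\ge2$, the product $\rmB_2$ contains $\mW_L=0$, so $\rmB_2=0$.

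In either case $\Delta\mW_j=0$ regardless of $\rmD$. This holds even though $\rmD$ itself need not vanish. For example, under MSE we have $\hat{\rmV}=0$, so $\rmD=-\rmV/s$; under inner product, $\rmD=-\rmV/s$ as well. The point is that the vanishing comes entirely from the structural zeros in $\rmA$ and $\rmB$, not from the loss.

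Next, I will include the learning rate and decay. The learning rate enters only through the scaled key $\hat{\rmK}=\boldsymbol\eta\odot\rmK$ at the fast node. Replacing $\rmK$ by $\hat{\rmK}$ in $\rmA_0$, or in the activation used for $\mW_j$, preserves the zero factors, so the update is still zero. Decay acts multiplicatively on the incoming state, e.g.\ $\phi_{c,0}\odot\mW_{\mathrm{start}}$ in the vector-decay formula. Since it acts on a zero state plus a zero write, it maps zero to zero.

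For the chunk-level state transition I will use $\mW_{\mathrm{end}}=\mW_{\mathrm{start}}-\hat{\rmK}^\top\mathbf d\hat{\rmV}$ applied per fast node. Here the local write at node $j$ is exactly the coupled gradient above, because the train-view backward is evaluated at the incoming chunk state. It follows that every fast factor is $0$ at the end of the chunk.

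Induction on the chunk index then shows the zero state persists across all chunks, absent an external perturbation. The contrast clause is immediate from the same identity with $L=1$: there $\rmA_0=\rmK$ and $\rmB_2=\rmI$, so $\Delta\mW=\rmK^\top\rmD$. This is nonzero for nonzero $\rmK,\rmD$ in generic position. More precisely, the claim needs $\rmK^\top\rmD\neq0$, which I will state as the intended reading of "nonzero".

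The main obstacle is the within-chunk causal readout. The query-view rule writes token by token through the $\mathrm{Tril}$ term, so I must check that no intermediate token-level state becomes nonzero. My approach is to note that each token-level write $\hat{\vk}_t^\top\vd_t$ at node $j$ uses train-view activations and backpropagated gradients computed at $\mW_{\mathrm{start}}=0$. These inherit the same zero factor: activations into $\mW_{j\ge2}$ vanish, and the gradient reaching $\mW_1$ passes through $\mW_2^\top=0$. Hence every summand in the $\mathrm{Tril}$ correction is zero, and the query-view state stays zero throughout the chunk.
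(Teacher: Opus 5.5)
Your proposal is correct and follows the paper's own argument. You invoke the factor-coupling identity $\Delta\mW_j=\rmA_{j-1}^\top\rmD\,\rmB_{j+1}^\top$, note that for $L\ge 2$ every factor update contains either a zero upstream factor ($j\ge 2$) or a zero downstream factor ($j=1$), iterate the zero state across chunks, and specialize to $L=1$ for the contrast. Your extra checks on learning-rate scaling, decay, and the within-chunk $\mathrm{Tril}$ writes are sound refinements rather than a different route, as is your remark that the contrast clause needs $\rmK^\top\rmD\neq 0$ (the paper's proof concedes this by saying ``generic'').
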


\begin{proof}
\textbf{Step 1: Apply the factor-coupled update identity.}
By Proposition~\ref{prop:factor_coupling},
\begin{equation}
\Delta\mW_j=\rmA_{j-1}^\top \rmD\,\rmB_{j+1}^\top .
\end{equation}
If $L\ge 2$ and all factors are zero, then for every $j$ either $\rmA_{j-1}$ contains a zero factor or $\rmB_{j+1}$ contains a zero factor. Hence $\Delta\mW_j=0$ for all $j$.

\textbf{Step 2: Show persistence of the zero state.}
Since every factor update is zero, the fast state remains at zero after the chunk update. The same argument therefore applies again at the next chunk unless another mechanism perturbs the fast weights.

\textbf{Step 3: Compare with the single-layer case.}
For a single linear fast learner, there is no second zero factor on the update path. At $\mW=0$, the update is
\begin{equation}
\Delta\mW=\rmK^\top\rmD.
\end{equation}
This is nonzero for generic nonzero $\rmK$ and $\rmD$. Thus zero initialization does not trap a single linear factor, but traps a product-form learner with at least two zero factors.
\end{proof}

This boundary case gives one mechanism behind the main-text deep-memory ablation. Once the fast learner is parameterized as a product of multiple trainable maps, the one-step TTT update is no longer governed by the single-layer geometry. Even when Gaussian initialization avoids the zero-gradient trap used in this proposition, the update remains factor-coupled, which is the mechanism emphasized in the main text.

\appsection{Additional ablation results}
\label{app:ablation_support}

\subsection{Operating point: mean scaling, small-lr init, and decay}
\label{app:mean_loss_decay_support}

\begin{table}[ht]
\centering
\small
\setlength{\tabcolsep}{2pt}
\caption{Update-scale ablation for the linear fast learner at 2048 context. We compare standard init and small-lr init with and without mean scaling. The small-lr columns without mean scaling match the main ablation setting. Lower is better.}
\label{tab:mean_loss_decay_support}
\begin{tabular}{llcccccccc}
\toprule
\multirow{3}{*}{Loss} & \multirow{3}{*}{Decay}
 & \multicolumn{4}{c}{160M}
 & \multicolumn{4}{c}{410M} \\
\cmidrule(lr){3-6}\cmidrule(lr){7-10}
 &  & \multicolumn{2}{c}{mean scaling}
    & \multicolumn{2}{c}{no mean scaling}
    & \multicolumn{2}{c}{mean scaling}
    & \multicolumn{2}{c}{no mean scaling} \\
\cmidrule(lr){3-4}\cmidrule(lr){5-6}\cmidrule(lr){7-8}\cmidrule(lr){9-10}
 &  & standard & small-lr init
    & standard & small-lr init
    & standard & small-lr init
    & standard & small-lr init \\
\midrule
\multirow{3}{*}{MSE}
 & none   & 3.1935 & 3.2298 & 3.6012 & 3.2005 & 2.9178 & 2.9561 & 3.4036 & 2.9343 \\
 & scalar & 3.0766 & 3.0791 & 3.3035 & 3.0380 & 2.8230 & 2.8139 & 3.0820 & 2.7949 \\
 & vector & 3.0419 & 3.0506 & 3.2405 & 3.0038 & 2.7948 & 2.7975 & 3.0548 & 2.7821 \\
\multirow{3}{*}{Inner}
 & none   & 3.2111 & 3.2314 & 3.2407 & 3.2028 & 2.9362 & 2.9601 & 2.9572 & 2.9273 \\
 & scalar & 3.0730 & 3.0774 & 3.0538 & 3.0383 & 2.8253 & 2.8164 & 2.8054 & 2.7938 \\
 & vector & 3.0402 & 3.0474 & 3.0033 & 3.0048 & 2.7983 & 2.7986 & 2.7793 & 2.7822 \\
\bottomrule
\end{tabular}
\end{table}

Table~\ref{tab:mean_loss_decay_support} separates two ways of controlling the inner-update scale. Mean scaling fixes the gradient scale by dividing by the chunk length, while small-lr init controls the initial learned learning rate and then lets the outer model adapt it. The standard-init rows without mean scaling are consistently worse, especially under MSE, which is the instability regime discussed in the main text. Once the scale is controlled, the decay ordering is stable: no decay is weakest, scalar decay gives most of the gain, and vector decay gives the lowest loss when it is available. This supports the main-text operating point: small-lr init without mean scaling, with scalar decay used as the default efficiency--quality trade-off.

\subsection{Shallow operators}
\label{app:linear_mean_family}

\begin{table}[ht]
\centering
\small
\setlength{\tabcolsep}{3pt}
\caption{Single-layer operator ablation at 2048 context. The columns without mean scaling match the main nonlinear-operator sweep, while the mean-scaling columns give standard-init controls. All rows use scalar decay; Norm denotes RMSNorm.
 Lower is better.}
\label{tab:linear_mean_family}
\begin{tabular}{llcccc}
\toprule
\multirow{3}{*}{Size} & \multirow{3}{*}{Method}
  & \multicolumn{2}{c}{MSE}
  & \multicolumn{2}{c}{Inner-product} \\
\cmidrule(lr){3-4}\cmidrule(lr){5-6}
  & & Mean scaling & No mean scaling & Mean scaling & No mean scaling \\
  & & standard & small-lr init & standard & small-lr init \\
\midrule
\multirow{4}{*}{160M}
 & Linear & 3.0766 & 3.0380 & 3.0730 & 3.0383 \\
 & Linear + GELU & 3.0872 & 3.0232 & 3.0859 & 3.0225 \\
 & Linear + SiLU & 3.0867 & \textbf{3.0205} & 3.0744 & 3.0242 \\
 & Linear + Norm & 3.0929 & 3.0300 & 3.0814 & 3.0335 \\
\midrule
\multirow{4}{*}{410M}
 & Linear & 2.8230 & 2.7949 & 2.8253 & 2.7938 \\
 & Linear + GELU & 2.8155 & 2.7844 & 2.8150 & \textbf{2.7832} \\
 & Linear + SiLU & 2.8139 & 2.7849 & 2.8148 & 2.7838 \\
 & Linear + Norm & 2.8511 & 2.8192 & 2.9009 & 2.7899 \\
\bottomrule
\end{tabular}
\end{table}

Table~\ref{tab:linear_mean_family} shows why the main text treats the activation gain as a shallow-operator effect tied to the chosen scale regime. With mean scaling and standard init, GELU and SiLU stay close to the Linear row. Without mean scaling and with small-lr init, both activations consistently improve over Linear at 160M and 410M. Norm is more sensitive to loss and scale, so it is not used as the default shallow extension.

\paragraph{Robustness.}
We compare MSE with inner-product loss for the Linear learner and compare Linear with Linear-SiLU under MSE using paired seeds $\{1,24,42,48,84\}$. All other training and evaluation settings are fixed. Table~\ref{tab:five_seed_robustness} reports the final validation loss at step 20{,}000 for each run, together with the five-seed mean and sample standard deviation.

\begin{table*}[t]
\centering
\small
\setlength{\tabcolsep}{4pt}
\caption{Multi-run results using seeds $\{1,24,42,48,84\}$. Entries are final validation losses at step 20{,}000; the last column reports the mean $\pm$ sample standard deviation. Lower is better.}
\label{tab:five_seed_robustness}
\begin{tabular}{llcccccc}
\toprule
Scale & Setting & Seed 1 & Seed 24 & Seed 42 & Seed 48 & Seed 84 & Mean $\pm$ std \\
\midrule
160M & Linear, MSE                & 3.0431 & 3.0371 & 3.0380 & 3.0384 & 3.0339 & $3.0381 \pm 0.0033$ \\
160M & Linear, inner-product loss & 3.0413 & 3.0374 & 3.0383 & 3.0386 & 3.0338 & $3.0379 \pm 0.0027$ \\
160M & Linear-SiLU, MSE           & 3.0202 & 3.0246 & 3.0205 & 3.0219 & 3.0215 & \textbf{$3.0217 \pm 0.0017$} \\
\midrule
410M & Linear, MSE                & 2.7981 & 2.7965 & 2.7949 & 2.7954 & 2.7919 & $2.7954 \pm 0.0023$ \\
410M & Linear, inner-product loss & 2.7923 & 2.7940 & 2.7938 & 2.7951 & 2.7917 & $2.7934 \pm 0.0014$ \\
410M & Linear-SiLU, MSE           & 2.7839 & 2.7953 & 2.7849 & 2.7953 & 2.7906 & \textbf{$2.7900 \pm 0.0055$} \\
\bottomrule
\end{tabular}
\end{table*}

The difference between MSE and inner-product loss is small and does not establish a consistent ordering across scales. Linear-SiLU outperforms its paired Linear-MSE run for all five seeds at both scales, supporting a consistent gain from the SiLU activation.

\subsection{Normalization diagnostics}
\label{app:normalization_support}

\paragraph{Normalization and $\epsilon$.}

\label{app:passive_eps}

\begin{table}[ht]
\centering
\small
\setlength{\tabcolsep}{4pt}
\caption{Normalization and $\epsilon$ at 160M for \code{Linear + Norm}. Lower is better.}
\label{tab:passive_norm_eps}
\begin{tabular}{lccc}
\toprule
 $\epsilon$ & Mean scaling + MSE & No mean scaling + MSE & No mean scaling + Inner \\
\midrule
$10^{-5}$ & 3.0929 & 3.0735 & 3.0787 \\
$10^{-3}$ & 3.0744 & 3.0631 & 3.0447 \\
$10^{-2}$ & 3.0733 & 3.0300 & 3.0335 \\
$0.05$    & 3.0467 & 3.0341 & 3.0392 \\
$0.1$     & 3.0475 & 3.0381 & 3.0387 \\
$0.5$     & 3.0967 & 3.0371 & 3.0377 \\
$1.0$     & 3.0884 & 3.0412 & 3.0419 \\
\bottomrule
\end{tabular}
\end{table}

Table~\ref{tab:passive_norm_eps} confirms that normalization is strongly $\epsilon$-dependent. Increasing $\epsilon$ can reduce this gradient amplification, but this is a stabilization effect rather than a robust quality improvement~{\ref{tab:stabilized_multilayer}, \ref{tab:linear_norm_linear_norm_eps}, and \ref{tab:residual_memories}}. The best Norm rows without mean scaling remain behind the GELU and SiLU rows in Table~\ref{tab:linear_mean_family}, so the main text attributes the shallow gain to simple pointwise activations rather than normalization.

\paragraph{SiLU followed by normalization.}
\label{app:silu_norm_shallow}

\begin{table}[ht]
\centering
\small
\setlength{\tabcolsep}{5pt}
\caption{\code{Linear + SiLU + Norm} at 2048 context. Rows use $\epsilon=10^{-2}$, scalar decay, and small-lr init, without mean scaling. Lower is better.}
\label{tab:silu_norm_shallow}
\begin{tabular}{lcc}
\toprule
 & 160M & 410M \\
\midrule
MSE   & 3.1543 & $\times$ \\
Inner & $\times$ & $\times$ \\
\bottomrule
\end{tabular}
\end{table}

Table~\ref{tab:silu_norm_shallow} tests whether Norm composes with the SiLU gain. In this setting it does not: the only completed row is worse than both \code{Linear + SiLU} and \code{Linear + Norm}, while the other rows either diverge or stay in failed high-loss regimes. This supports keeping the shallow recommendation simple: one linear fast map followed by a pointwise activation.

\paragraph{Fast-weight normalization.}
\label{app:fw_norm_support}

\begin{table}[ht]
\centering
\small
\setlength{\tabcolsep}{4pt}
\caption{Fast-weight normalization at 160M. We vary mean scaling, small-lr init, and $\epsilon$ for the same topology. Lower is better.}
\label{tab:fw_norm}
\begin{tabular}{ccccc}
\toprule
Mean scaling & Small-lr init & $\epsilon$ & MSE & Inner \\
\midrule
\multirow{3}{*}{no}
 & \multirow{3}{*}{yes}
 & default & $\times$ & $\times$ \\
 &  & 0.25    & $\times$ & $\times$ \\
 &  & 1.0     & $\times$ & $\times$ \\
\midrule
\multirow{3}{*}{yes}
 & \multirow{3}{*}{no}
 & default & 3.2719 & 3.2724 \\
 &  & 0.25    & 3.1942 & 3.2007 \\
 &  & 1.0     & 3.1989 & 3.2060 \\
\midrule
\multirow{3}{*}{yes}
 & \multirow{3}{*}{yes}
 & default & 3.1835 & 3.2046 \\
 &  & 0.25    & 3.1831 & \textbf{3.1683} \\
 &  & 1.0     & 3.1737 & 3.1695 \\
\bottomrule
\end{tabular}
\end{table}

Table~\ref{tab:fw_norm} isolates the stability factors for fast-weight normalization. Without mean scaling, the tested rows fail; with mean scaling, the runs become trainable, but they remain worse than the strongest shallow activation rows in Table~\ref{tab:linear_mean_family}. These runs therefore serve as diagnostics for the normalization mechanism rather than candidates for the final model family. 

\subsection{Multilayer and gated memories}
\label{app:family_support}

This subsection reports the deeper and gated-memory sweeps behind Table~\ref{tab:multilayer_90m}. The goal is to test whether a more expressive inner learner improves the one-step Modular TTT setting after the update scale has been stabilized. All rows use the 160M scale, 2048-token context, MSE loss, scalar decay, small-lr init, and Gaussian fast-weight initialization, unless a table states otherwise. The shallow reference is \code{Linear-SiLU}, with validation loss 3.0205.

We organize the evidence in three steps. First, Table~\ref{tab:multilayer_family_frontier} lists the best completed row from each deeper family. Second, Tables~\ref{tab:product_activation_depth} and \ref{tab:stabilized_multilayer} give same-setting sweeps over depth, activation placement, mean scaling, and normalization. Third, Tables~\ref{tab:residual_memories} and \ref{tab:fast_weight_init_ablation} report stability and initialization checks for residual and gated memories.

\begin{table}[H]
\centering
\small
\setlength{\tabcolsep}{5pt}
\caption{Best completed rows for deeper and gated memories at 160M. $\Delta$ is the validation-loss gap to the shallow \code{Linear-SiLU} reference, 3.0205. Lower is better.}
\label{tab:multilayer_family_frontier}
\begin{tabular*}{\textwidth}{@{\extracolsep{\fill}}p{0.24\textwidth}p{0.40\textwidth}cc@{}}
\toprule
Family & Topology & 160M & $\Delta$ \\
\midrule
Shallow reference & Linear-SiLU & 3.0205 & -- \\
\midrule
Product-form linear & Linear-Linear & 3.1265 & +0.1060 \\
Activation-only & Linear-SiLU-Linear-SiLU & 3.1156 & +0.0951 \\
Norm-containing & Linear-SiLU-Linear-Norm ($\epsilon=0.5$) & \textbf{3.1144} & +0.0939 \\
Residual & $f_{\mathrm{res\text{-}norm}}$ ($\epsilon=0.5$) & 3.1643 & +0.1438 \\
Gated & SwiGLU + Norm & 3.2041 & +0.1836 \\
\bottomrule
\end{tabular*}
\end{table}

\paragraph{Best completed rows.}
Table~\ref{tab:multilayer_family_frontier} gives the main result. Some deeper memories can be made trainable, but none improves over the shallow \code{Linear-SiLU} reference. The best deeper row is \code{Linear-SiLU-Linear-Norm} at $\epsilon=0.5$, and it is still worse by 0.0939 validation loss.

\begin{table}[H]
\centering
\small
\setlength{\tabcolsep}{6pt}
\caption{Two-layer linear and activation-only memories at 160M. Rows use MSE loss, scalar decay, small-lr init, and Gaussian fast-weight initialization. The grid varies SiLU placement and mean scaling. Lower is better.}
\label{tab:product_activation_depth}
\begin{tabular*}{\textwidth}{@{\extracolsep{\fill}}p{0.45\textwidth}cc@{}}
\toprule
Topology & Mean scaling & No mean scaling \\
\midrule
Linear-Linear             & 3.1888 & 3.1265 \\
Linear-Linear-SiLU        & 3.1876 & 3.1223 \\
Linear-SiLU-Linear        & 3.1845 & 3.1240 \\
Linear-SiLU-Linear-SiLU   & 3.1767 & \textbf{3.1156} \\
\bottomrule
\end{tabular*}
\end{table}

\paragraph{Matched depth sweeps.}
Table~\ref{tab:product_activation_depth} varies where SiLU is placed in a two-linear-layer memory. Rows without mean scaling are better than rows with mean scaling in this grid. Adding SiLU between the two linear maps or after the second map helps slightly, but the gain is small compared with the gap to \code{Linear-SiLU}. All eight cells remain behind the shallow reference. This matches the product-form analysis in Appendix~\ref{app:deep_theory}: adding fast factors changes the inner-update geometry, and a pointwise activation does not remove this coupling.

\begin{table}[H]
\centering
\small
\setlength{\tabcolsep}{6pt}
\caption{Norm-containing two-layer memories at 160M. Rows use MSE loss, scalar decay, small-lr init, and Gaussian fast-weight initialization. The default is $\epsilon=10^{-5}$. Lower is better.}
\label{tab:stabilized_multilayer}
\begin{tabular*}{\textwidth}{@{\extracolsep{\fill}}p{0.45\textwidth}cc@{}}
\toprule
Topology & Mean scaling & No mean scaling \\
\midrule
Linear-Linear-Norm        & 3.2409   & $\times$ \\
Linear-Norm-Linear        & 3.1858   & \textbf{3.1594} \\
Linear-SiLU-Linear-Norm   & 3.2792   & $\times$ \\
Linear-SiLU-Linear-Norm, $\epsilon=0.5$ & 3.1144 & 3.1939 \\
Linear-Norm-Linear-Norm   & $\times$ & $\times$ \\
\bottomrule
\end{tabular*}
\end{table}

\begin{table}[H]
\centering
\small
\setlength{\tabcolsep}{6pt}
\caption{Mean-scaling epsilon sweep for Linear-Norm-Linear-Norm at 160M. Rows use MSE loss, scalar decay, small-lr init, and Gaussian fast-weight initialization. Lower is better; $\times$ denotes divergence.}
\label{tab:linear_norm_linear_norm_eps}
\begin{tabular*}{\textwidth}{@{\extracolsep{\fill}}p{0.45\textwidth}c@{}}
\toprule
Setting & Mean scaling \\
\midrule
Linear-Norm-Linear-Norm, $\epsilon=0.1$  & $\times$ \\
Linear-Norm-Linear-Norm, $\epsilon=0.25$ & 3.4962 \\
Linear-Norm-Linear-Norm, $\epsilon=0.5$  & 3.4755 \\
Linear-Norm-Linear-Norm, $\epsilon=0.75$ & 3.1750 \\
Linear-Norm-Linear-Norm, $\epsilon=1.0$  & 3.1637 \\
\bottomrule
\end{tabular*}
\end{table}

Table~\ref{tab:stabilized_multilayer} gives the matched comparison for norm-containing depth. All trainable rows remain behind the shallow reference. Increasing $\epsilon$ above the default $10^{-5}$ stabilizes some rows without mean scaling, but it does not close the gap: \code{Linear-SiLU-Linear-Norm} reaches 3.1144 at $\epsilon=0.5$. For the deeper \textbf{\code{Linear-Norm-Linear-Norm}} case, the mean-scaling sweep in Table~\ref{tab:linear_norm_linear_norm_eps} reaches 3.1637 at $\epsilon=1.0$.

\paragraph{Chunk-size robustness.}
We sweep $c\in\{128,256,512\}$ at 160M and 410M, keeping all training parameters fixed and varying only $c$ within each learner graph. Table~\ref{tab:chunk_size_robustness} reports the final validation loss at step 20{,}000.

\begin{table}[H]
\centering
\small
\setlength{\tabcolsep}{5pt}
\caption{Chunk-size robustness at 160M and 410M. Within each learner graph, all training parameters are fixed and only the chunk size $c$ changes. Entries are final validation losses at step 20{,}000; lower is better.}
\label{tab:chunk_size_robustness}
\begin{tabular*}{\textwidth}{@{\extracolsep{\fill}}lcccccc@{}}
\toprule
\multirow{2}{*}{Learner graph}
& \multicolumn{3}{c}{160M} & \multicolumn{3}{c}{410M} \\
\cmidrule(lr){2-4}\cmidrule(lr){5-7}
& $c=128$ & $c=256$ & $c=512$ & $c=128$ & $c=256$ & $c=512$ \\
\midrule
Linear-GELU & 3.0224 & 3.0232 & 3.0247 & 2.7834 & \textbf{2.7844} & \textbf{2.7838} \\
Linear-SiLU & \textbf{3.0210} & \textbf{3.0205} & \textbf{3.0217} & \textbf{2.7833} & 2.7849 & 2.7845 \\
Linear-Norm & 3.0375 & 3.0300 & 3.0285 & 2.7961 & 2.8192 & 2.8066 \\
Linear-SiLU-Linear-Norm & 3.1373 & 3.1144 & 3.1132 & 2.9295 & 2.9258 & 2.9128 \\
\bottomrule
\end{tabular*}
\end{table}

Linear-GELU and Linear-SiLU form the strongest group in all six settings. Linear-Norm remains behind these activation-only shallow learners, and the deeper \code{Linear-SiLU-Linear-Norm} learner remains behind every single-layer learner across all evaluated chunk sizes and model scales.

For residual memories, we test two graph forms. Let $\sigma$ denote SiLU. The first applies an activation after the residual addition,
\begin{equation}
f_{\mathrm{res\text{-}act}}(\vk)
=
\sigma\!\left(\vk + \sigma(\vk\mW_0)\mW_1\right),
\end{equation}
and the second applies output RMSNorm after the same residual addition,
\begin{equation}
f_{\mathrm{res\text{-}norm}}(\vk)
=
\mathrm{RMSNorm}_{\epsilon}\!\left(\vk + \sigma(\vk\mW_0)\mW_1\right).
\end{equation}
These expressions correspond directly to the graph \code{input -> linear -> SiLU -> linear -> add(input, linear) -> final operator}.

\begin{table}[H]
\centering
\small
\setlength{\tabcolsep}{6pt}
\caption{Residual and gated memories at 160M. Rows use MSE loss, scalar decay, small-lr init, and Gaussian fast-weight initialization. The residual rows should be compared with their non-residual counterparts: $f_{\mathrm{res\text{-}act}}$ with Linear-SiLU-Linear-SiLU, and $f_{\mathrm{res\text{-}norm}}$ with Linear-SiLU-Linear-Norm at matched $\epsilon$. Lower is better.}
\label{tab:residual_memories}
\begin{tabular*}{\textwidth}{@{\extracolsep{\fill}}llcc@{}}
\toprule
Family & Setting & Mean scaling & No mean scaling \\
\midrule
\multirow{3}{*}{Baseline for Residual}
& Linear-SiLU-Linear-SiLU & 3.1767          & 3.1156 \\
& Linear-SiLU-Linear-Norm, $\epsilon=10^{-5}$ & 3.2792          & $\times$ \\
& Linear-SiLU-Linear-Norm, $\epsilon=0.5$ & 3.1144          & 3.1939 \\
\midrule
\multirow{3}{*}{Residual}
 & $f_{\mathrm{res\text{-}act}}$ & 3.2367          & $\times$ \\
 & $f_{\mathrm{res\text{-}norm}}$, $\epsilon=10^{-5}$ & $\times$ & $\times$ \\
 & $f_{\mathrm{res\text{-}norm}}$, $\epsilon=0.5$ & 3.2068          & \textbf{3.1643} \\
\midrule
SwiGLU
 & $-$                             & $\times$        & $\times$ \\
\midrule
\multirow{4}{*}{SwiGLU + Norm}
 & $\epsilon=10^{-5}$              & $\times$        & $\times$ \\
 & $\epsilon=0.25$                 & 3.2097          & $\times$ \\
 & $\epsilon=0.5$                  & 3.2126          & $\times$ \\
 & $\epsilon=1.0$                  & \textbf{3.2041} & $\times$ \\
\bottomrule
\end{tabular*}
\end{table}

\paragraph{Residual and gated checks.}
According to Table~\ref{tab:multilayer_90m}, unlike the chain-structured rows kept in the main text, these memories introduce additive or multiplicative branch interactions, so their stability depends more strongly on mean scaling and RMSNorm $\epsilon$. 
Table~\ref{tab:residual_memories} shows that these extensions do not improve the controlled one-step TTT setting. The residual rows remain worse than their non-residual counterparts in Tables~\ref{tab:product_activation_depth} and \ref{tab:stabilized_multilayer}, while SwiGLU requires additional stabilization and still stays below the shallow reference. Thus, the residual and gated checks support the same conclusion as the main deep-memory table: adding branch structure increases stabilization sensitivity but does not close the gap to the shallow Linear-SiLU frontier.

\begin{table}[H]
\centering
\small
\setlength{\tabcolsep}{6pt}
\caption{Fast-weight initialization ablation at 160M with MSE loss. Zero init sets all fast weights to zero; Gaussian init follows the official TTT initialization. Lower is better.}
\label{tab:fast_weight_init_ablation}
\begin{tabular*}{\textwidth}{@{\extracolsep{\fill}}p{0.45\textwidth}cc@{}}
\toprule
Topology & Zero init & Gaussian init \\
\midrule
\multicolumn{3}{l}{\textit{No mean scaling}} \\
\quad Single Linear              & 3.0343   & 3.0380 \\
\quad Linear-SiLU                & 3.0247   & \textbf{3.0205} \\
\quad Linear-Linear              & $\times$ & 3.1265 \\
\quad Linear-SiLU-Linear-SiLU    & $\times$ & 3.1156 \\
\quad SwiGLU + Norm              & $\times$ & $\times$ \\
\midrule
\multicolumn{3}{l}{\textit{Mean scaling}} \\
\quad SwiGLU + Norm              & $\times$ & 3.2041 \\
\bottomrule
\end{tabular*}
\end{table}

Table~\ref{tab:fast_weight_init_ablation} connects the initialization study to Appendix~\ref{app:deep_theory}. Zero initialization works for the single linear and shallow \code{Linear-SiLU} rows, but it fails for the product-form deeper and gated rows. Gaussian initialization avoids this zero-gradient failure mode. Even with Gaussian initialization, however, deeper and gated memories still do not beat the shallow \code{Linear-SiLU} reference.

Together, these sweeps support the main-text conclusion: in the tested one-step Modular TTT setting, increasing the fast-learner depth or adding residual and gated structure does not improve over the shallow \code{Linear-SiLU} memory.

\subsection{Baselines and related model comparisons}
\label{app:external_repro}

This appendix gathers the external-system checks used to interpret the main comparisons. These checks should not be read as isolated graph-memory ablations. The baseline implementations package several choices together: the official TTT code has separate handwritten \texttt{linear} and \texttt{mlp} learner paths, while the LaCT code combines sliding-window softmax attention, a SwiGLU-style fast learner, optional TTT branching, normalization, learning-rate prediction, and fused kernels inside one layer. This is precisely why the main paper uses Modular TTT for controlled ablations: in our implementation, the graph topology, loss, learning-rate rule, decay rule, and update schedule are exposed as separate design dimensions rather than being tied to a model-specific update path.

\subsubsection{Official TTT comparison}
\label{app:official_ttt_supp}

\begin{table}[ht]
\centering
\small
\caption{Official TTT reproduction baselines at 160M and 2048 context. Lower is better.}
\label{tab:official_ttt_compare}
\begin{tabular}{lcc}
\toprule
Configuration & Source & 160M \\
\midrule
Linear, no norm & Official TTT & 3.3953 \\
TTT-Linear (Linear + norm) & Official TTT & 3.3951 \\
TTT-MLP (Linear-GELU-linear + norm) & Official TTT & 3.4061 \\
\bottomrule
\end{tabular}
\end{table}

Table~\ref{tab:official_ttt_compare} records the completed reproduction runs using the official TTT implementation \citep{Sun2024TTT}. The official implementation is organized around two fixed learner classes. The linear path maintains a single fast matrix and bias, uses a layer-normalized L2 reconstruction target, and implements separate dual-form and cache-aware update logic. The MLP path repeats the same structure with two fast matrices, a GELU nonlinearity, explicit backward computations for both factors, and topology-specific state dictionaries. As a result, changing the learner family also changes the update derivation, normalization placement, and cache state layout.

This differs from Modular TTT. In our implementation, a graph is built from primitive nodes such as Linear, Act, Norm, Add, and Multiply; the same three-pass procedure runs train-view forward, train-view backward, and query-view forward for any valid graph. Loss choice and decay are selected independently of the graph. The official runs in Table~\ref{tab:official_ttt_compare} are therefore useful as reproduction baselines, but they are not component-matched ablations against a single Modular TTT setting.

The relatively high losses in Table~\ref{tab:official_ttt_compare} are consistent with this recipe mismatch. The official path uses a fixed reconstruction-style TTT objective with residual target \(\mathbf V-\mathbf K\), a learned learning-rate gate scaled by the head dimension, no scalar/vector decay in these reproduction configs, and topology-specific normalization/update code. In the main controlled ablations, each of these dimensions matters: no decay is substantially worse than scalar decay, normalization is mixed, and deeper product-form learners are fragile. Thus the official rows combine several weaker choices under our training setup, rather than isolating a single disadvantage. The gap should therefore be read as evidence that a modular implementation can match or improve quality while exposing the relevant design knobs, not as a claim that the official code is a controlled lower bound for every TTT recipe.

\subsubsection{Inference efficiency}
\label{app:inference_efficiency}

We evaluated inference efficiency using the 160M and 410M model configurations. The model architectures and training configurations match the corresponding main experiments, and all models follow the same inference measurement protocol. Table~\ref{tab:inference_efficiency} reports prefill throughput, decode throughput, and peak allocated device memory.

During autoregressive decoding, Modular TTT updates the fast-weight state only at chunk boundaries. Tokens in the current incomplete chunk still use all updates accumulated up to the current position, so the associated computation grows within the chunk and introduces additional decode overhead relative to GDN's token-wise recurrent update. We leave optimization of this decode path to future work.

\begin{table}[H]
\centering
\small
\setlength{\tabcolsep}{4pt}
\caption{Inference at 160M and 410M. \code{M} denotes Modular TTT-Linear, \code{M-SiLU} denotes Modular TTT-Linear-SiLU, and Inner denotes inner-product loss. Higher throughput and lower peak allocated memory are better.}
\label{tab:inference_efficiency}
\begin{tabular*}{\textwidth}{@{\extracolsep{\fill}}llrrr@{}}
\toprule
Scale & Model & Prefill (tokens/s) & Decode (tokens/s) & Peak allocated (GiB) \\
\midrule
\multirow{6}{*}{160M}
 & GDN & 700{,}983.9 & 1{,}821.36 & 1.634 \\
 & LaCT & 264{,}403.2 & 113.69 & 2.154 \\
 & M (MSE) & 449{,}824.0 & 790.45 & 1.500 \\
 & M-SiLU (MSE) & 404{,}772.1 & 700.55 & 1.500 \\
 & M (Inner) & 448{,}892.3 & 789.93 & 1.500 \\
 & M-SiLU (Inner) & 401{,}684.6 & 685.03 & 1.500 \\
\midrule
\multirow{6}{*}{410M}
 & GDN & 227{,}305.4 & 400.70 & 1.779 \\
 & LaCT & 86{,}888.0 & 37.46 & 2.043 \\
 & M (MSE) & 130{,}244.3 & 219.67 & 1.751 \\
 & M-SiLU (MSE) & 116{,}602.9 & 192.31 & 1.689 \\
 & M (Inner) & 128{,}931.6 & 219.13 & 1.689 \\
 & M-SiLU (Inner) & 112{,}546.9 & 201.49 & 1.751 \\
\bottomrule
\end{tabular*}
\end{table}

GDN provides the highest prefill and decode throughput at both scales. Every evaluated Modular TTT configuration is faster than the current LaCT implementation and uses less peak allocated memory than GDN and LaCT. Within Modular TTT, the Linear variants provide the highest throughput, and the choice between MSE and inner-product loss has little effect on inference efficiency.

\subsubsection{LaCT comparison}
\label{app:lact_check}

\begin{table}[ht]
\centering
\small
\caption{External systems check in a LaCT-style setting with sliding-window size 512 and chunk size 512. The table reports final validation loss. Lower is better. Absolute loss is comparable only within tokenizer-matched groups.}
\label{tab:lact_compare}
\begin{tabular}{lcc}
\toprule
Configuration & 160M & 930M (official in paper) \\
\midrule
LaCT SwiGLU, official tokenizer, no TTT & 2.7769 & 2.4012 \\
LaCT SwiGLU, official tokenizer, TTT path & 2.7978 & 2.4009 \\
\bottomrule
\end{tabular}
\end{table}

We also repeated the comparison in the LaCT setting \citep{Zhang2025TTTRight}, using sliding-window size 512 together with chunk size 512. The LaCT implementation first computes sliding-window softmax attention and then optionally adds a TTT branch. When the branch is enabled, the code reuses or separately projects Q/K/V for the fast learner, applies SiLU and L2 normalization to the fast views, predicts per-branch learning rates, and dispatches to either a linear or SwiGLU update path. The SwiGLU path uses three fast weights and the form
\[
f(x)=W_1\bigl(\mathrm{SiLU}(W_0x)\odot W_2x\bigr),
\]
with optional momentum, Muon-style update normalization, fast-weight norm control, and fused Triton kernels. Thus LaCT changes the fast learner, local attention path, schedule, optimizer details, and normalization together.

Table~\ref{tab:lact_compare} should therefore be interpreted as a boundary-regime systems check. With the official tokenizer and this strong sliding-window setting, disabling the TTT path is comparable to enabling it: the no-TTT variant is better at 160M, while the 930M official-scale rows are essentially tied. This does not contradict the main Modular TTT ablations. Rather, it shows that in this LaCT configuration the sliding-window softmax path carries much of the performance, so the result is not an isolated test of the SwiGLU graph learner. The controlled Modular TTT sweeps in the main text and Appendix~\ref{app:family_support} are the relevant evidence for the claim that, under our one-step TTT operating point, deeper and gated graph learners do not improve over the shallow frontier.

\subsection{Per-token loss diagnostics}
\label{app:per_token_loss}
Figure~\ref{fig:per_token_loss} examines how loss evolves along the sequence. The selected Modular TTT variants remain stable around and beyond the 4k training context, without the sharp post-context degradation observed in some baselines. This suggests that the shallow variants selected by the ablation are not only competitive in aggregate loss, but also behave robustly across token positions.

\begin{figure}[!t]
\centering
\includegraphics[width=0.9\linewidth]{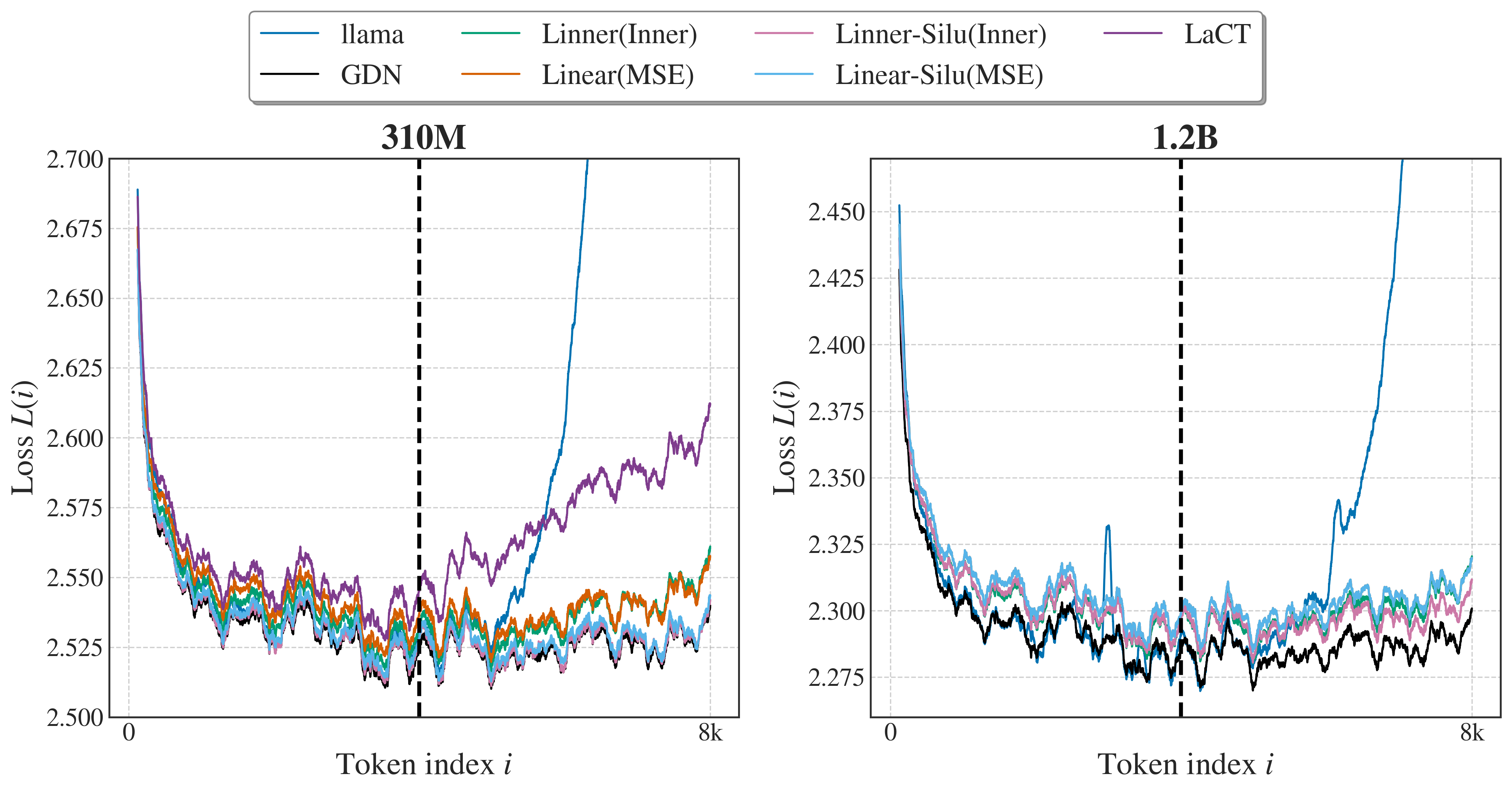}
\caption{Per-token loss along an 8k sequence for representative 410M shallow Modular TTT variants. The dashed line marks the 4k training context used in the large-scale setting. Lower is better.}
\label{fig:per_token_loss}
\end{figure}

\subsection{RULER evaluation details}
\label{app:ruler_details}

We evaluate explicit retrieval with the needle-in-a-haystack (NIAH) tasks from RULER \citep{Hsieh2024RULER}. The main text reports the \texttt{niah\_single} family at 1k--8k as a compact view of direct single-key retrieval. This appendix gives the complete NIAH results used for that analysis, including \texttt{niah\_multikey}, \texttt{niah\_single}, \texttt{niah\_multiquery}, and \texttt{niah\_multivalue} at 1k--8k. For \texttt{niah\_multikey} and \texttt{niah\_single}, the family-level rows average the corresponding three templates; \texttt{niah\_multiquery} and \texttt{niah\_multivalue} are single task families.

The full results confirm the main-text conclusion. The selected Modular TTT variants are competitive with recurrent baselines on some short- and mid-context NIAH settings, but they do not match LLaMA on precise retrieval. The gap is largest at 4k--8k and on harder multi-key templates, where several models are near zero. Across Modular TTT variants, inner-product loss is often stronger for explicit retrieval at 410M, while the SiLU variants are less consistently beneficial than in language-modeling loss. These results suggest that the current shallow fast-weight memories improve efficient sequence modeling but are not yet reliable exact-recall mechanisms.

\begin{table*}[!t]
\centering
\small
\setlength{\tabcolsep}{3pt}
\renewcommand{\arraystretch}{0.96}
\caption{Family-level \texttt{niah\_single} RULER results at 410M and 1.45B. Entries are accuracies in percent. Higher is better.}
\label{tab:ruler_single_family_full}
\begin{tabular*}{\textwidth}{@{\extracolsep{\fill}}llrrrr@{}}
\toprule
Scale & Model & 1k & 2k & 4k & 8k \\
\midrule
\multirow{7}{*}{410M}
 & LLaMA & 97.93 & 93.33 & 80.47 & 51.87 \\
 & GDN & 87.73 & 53.27 & 38.40 & 15.33 \\
 & LaCT & 84.67 & 64.20 & 33.80 & 7.13 \\
 & M (MSE) & 76.27 & 62.27 & 26.40 & 8.00 \\
 & M-SiLU (MSE) & 74.33 & 50.67 & 25.27 & 13.60 \\
 & M (Inner) & 81.67 & 55.73 & 28.93 & 14.07 \\
 & M-SiLU (Inner) & 61.27 & 42.33 & 15.60 & 8.47 \\
\midrule
\multirow{6}{*}{1.45B}
 & LLaMA & 99.00 & 94.13 & 99.20 & 92.13 \\
 & GDN & 86.60 & 67.00 & 42.20 & 18.07 \\
 & M (MSE) & 75.13 & 67.40 & 48.67 & 21.07 \\
 & M-SiLU (MSE) & 88.60 & 64.93 & 52.87 & 15.00 \\
 & M (Inner) & 88.93 & 79.87 & 50.00 & 18.20 \\
 & M-SiLU (Inner) & 79.73 & 78.07 & 46.20 & 18.73 \\
\bottomrule
\end{tabular*}
\end{table*}

\begin{table*}[!t]
\centering
\small
\setlength{\tabcolsep}{3pt}
\renewcommand{\arraystretch}{0.96}
\caption{Family-level RULER results for the three multi-NIAH families at 410M and 1.45B. Entries are accuracies in percent. Higher is better.}
\label{tab:ruler_multi_family_full}
\begin{tabular*}{\textwidth}{@{\extracolsep{\fill}}lllrrrr@{}}
\toprule
Scale & Family & Model & 1k & 2k & 4k & 8k \\
\midrule
\multirow{21}{*}{410M}
 & \multirow{7}{*}{\texttt{niah\_multikey}} & LLaMA & 26.07 & 18.53 & 17.07 & 9.93 \\
 &  & GDN & 7.80 & 7.67 & 7.27 & 3.73 \\
 &  & LaCT & 14.13 & 9.53 & 6.60 & 3.73 \\
 &  & M (MSE) & 8.33 & 7.20 & 6.93 & 3.33 \\
 &  & M-SiLU (MSE) & 8.33 & 7.40 & 5.87 & 5.20 \\
 &  & M (Inner) & 8.00 & 7.40 & 7.33 & 4.93 \\
 &  & M-SiLU (Inner) & 9.53 & 7.53 & 7.20 & 4.67 \\
\cmidrule(lr){2-7}
 & \multirow{7}{*}{\texttt{niah\_multiquery}} & LLaMA & 66.10 & 46.55 & 26.80 & 4.05 \\
 &  & GDN & 22.35 & 20.25 & 14.10 & 6.05 \\
 &  & LaCT & 25.35 & 20.95 & 10.80 & 3.50 \\
 &  & M (MSE) & 24.25 & 21.55 & 17.25 & 6.40 \\
 &  & M-SiLU (MSE) & 23.55 & 21.10 & 14.90 & 3.00 \\
 &  & M (Inner) & 45.15 & 26.80 & 20.40 & 11.45 \\
 &  & M-SiLU (Inner) & 28.45 & 23.85 & 18.75 & 7.65 \\
\cmidrule(lr){2-7}
 & \multirow{7}{*}{\texttt{niah\_multivalue}} & LLaMA & 61.55 & 46.45 & 28.45 & 9.15 \\
 &  & GDN & 30.60 & 20.00 & 17.15 & 9.00 \\
 &  & LaCT & 29.70 & 21.60 & 12.05 & 6.40 \\
 &  & M (MSE) & 33.55 & 24.90 & 16.70 & 5.30 \\
 &  & M-SiLU (MSE) & 31.40 & 22.15 & 15.70 & 4.80 \\
 &  & M (Inner) & 44.50 & 28.65 & 22.95 & 11.25 \\
 &  & M-SiLU (Inner) & 33.75 & 22.40 & 18.65 & 11.95 \\
\midrule
\multirow{18}{*}{1.45B}
 & \multirow{6}{*}{\texttt{niah\_multikey}} & LLaMA & 43.47 & 34.80 & 30.80 & 20.13 \\
 &  & GDN & 14.40 & 10.60 & 8.73 & 3.53 \\
 &  & M (MSE) & 16.60 & 12.07 & 8.60 & 4.87 \\
 &  & M-SiLU (MSE) & 12.53 & 9.67 & 7.40 & 3.07 \\
 &  & M (Inner) & 15.20 & 12.33 & 8.20 & 3.93 \\
 &  & M-SiLU (Inner) & 12.07 & 9.73 & 5.87 & 3.80 \\
\cmidrule(lr){2-7}
 & \multirow{6}{*}{\texttt{niah\_multiquery}} & LLaMA & 89.20 & 85.25 & 65.40 & 21.95 \\
 &  & GDN & 70.35 & 48.90 & 19.35 & 4.70 \\
 &  & M (MSE) & 64.75 & 54.10 & 28.65 & 3.10 \\
 &  & M-SiLU (MSE) & 50.20 & 30.20 & 17.15 & 4.75 \\
 &  & M (Inner) & 73.50 & 49.05 & 20.75 & 8.00 \\
 &  & M-SiLU (Inner) & 44.85 & 26.55 & 17.35 & 2.80 \\
\cmidrule(lr){2-7}
 & \multirow{6}{*}{\texttt{niah\_multivalue}} & LLaMA & 85.10 & 72.70 & 52.20 & 22.35 \\
 &  & GDN & 73.60 & 56.00 & 24.30 & 12.35 \\
 &  & M (MSE) & 68.90 & 52.80 & 35.05 & 10.15 \\
 &  & M-SiLU (MSE) & 51.85 & 30.05 & 18.10 & 5.90 \\
 &  & M (Inner) & 76.60 & 52.15 & 20.05 & 6.95 \\
 &  & M-SiLU (Inner) & 50.45 & 22.90 & 16.70 & 3.15 \\
\bottomrule
\end{tabular*}
\end{table*}

\appsection{Limitations}
\label{app:limitations}

\paragraph{Limitations.}
Our study is limited to autoregressive language modeling under the training budgets and model scales reported in the experiments. Although Modular TTT exposes a broad set of graph, loss, learning-rate, decay, and normalization choices, it does not exhaust all possible inner learners or optimization schedules; in particular, deeper and gated learners may behave differently under different update rules, chunk sizes, optimizers, or hybrid attention designs. This scope also excludes system-specific update refinements used in LaCT, such as Muon-style update normalization and momentum variants, because they do not fit the current efficient fused graph implementation without adding specialized update paths; in our LaCT-style checks, these refinements did not produce a clear quality gain. Empirically, the selected shallow Modular TTT variants remain weaker than LLaMA on containment-style tasks and explicit long-context retrieval, indicating that fixed-state TTT still has limitations for precise recall. These limitations suggest that future work should study richer update schedules, better retrieval-oriented memory mechanisms, and broader evaluations beyond the language-modeling setting considered here.

%% file: main.bib
@inproceedings{Vaswani+2017,
 author = {Vaswani, Ashish and Shazeer, Noam and Parmar, Niki and Uszkoreit, Jakob and Jones, Llion and Gomez, Aidan N. and Kaiser, Lukasz and Polosukhin, Illia},
 booktitle = {Advances in Neural Information Processing Systems},
 title = {Attention Is All You Need},
 volume = {30},
 year = {2017}
}

@inproceedings{Katharopoulos2020LinearAttention,
 author = {Katharopoulos, Angelos and Vyas, Apoorv and Pappas, Nikolaos and Fleuret, Fran{\c{c}}ois},
 booktitle = {Proceedings of the 37th International Conference on Machine Learning},
 title = {Transformers Are RNNs: Fast Autoregressive Transformers with Linear Attention},
 year = {2020}
}

@inproceedings{GuDao2024Mamba,
 author = {Gu, Albert and Dao, Tri},
 booktitle = {First Conference on Language Modeling},
 title = {{Mamba}: Linear-Time Sequence Modeling with Selective State Spaces},
 year = {2024}
}

@inproceedings{DaoGu2024SSM,
 author = {Dao, Tri and Gu, Albert},
 title = {Transformers Are {SSM}s: Generalized Models and Efficient Algorithms Through Structured State Space Duality},
 booktitle = {Proceedings of the 41st International Conference on Machine Learning},
 series = {Proceedings of Machine Learning Research},
 volume = {235},
 pages = {10041--10071},
 year = {2024},
 publisher = {PMLR}
}

@article{Sun2024TTT,
 author = {Sun, Yu and Li, Xinhao and Dalal, Karan and Xu, Jiarui and Vikram, Arjun and Zhang, Genghan and Dubois, Yann and Chen, Xinlei and Wang, Xiaolong and Koyejo, Sanmi and Hashimoto, Tatsunori and Guestrin, Carlos},
 journal = {arXiv preprint arXiv:2407.04620},
 title = {Learning to (Learn at Test Time): {RNN}s with Expressive Hidden States},
 year = {2024}
}

@inproceedings{Yang2025DeltaNet,
 author = {Yang, Songlin and Wang, Bailin and Zhang, Yu and Shen, Yikang and Kim, Yoon},
 booktitle = {Advances in Neural Information Processing Systems},
 volume = {37},
 title = {Parallelizing Linear Transformers with the Delta Rule over Sequence Length},
 year = {2024},
 doi = {10.52202/079017-3668},
 url = {https://proceedings.neurips.cc/paper_files/paper/2024/hash/d13a3eae72366e61dfdc7eea82eeb685-Abstract-Conference.html}
}

@inproceedings{Yang2025GDN,
 author = {Yang, Songlin and Kautz, Jan and Hatamizadeh, Ali},
 booktitle = {International Conference on Learning Representations},
 title = {Gated Delta Networks: Improving {Mamba}2 with Delta Rule},
 year = {2025}
}

@article{Behrouz2025Titans,
 author = {Behrouz, Ali and Zhong, Peilin and Mirrokni, Vahab},
 journal = {arXiv preprint arXiv:2501.00663},
 title = {{Titans}: Learning to Memorize at Test Time},
 year = {2025}
}

@article{Tandon2025E2ETTT,
 author = {Tandon, Arnuv and Dalal, Karan and Li, Xinhao and Koceja, Daniel and R{\o}d, Marcel and Buchanan, Sam and Wang, Xiaolong and Leskovec, Jure and Koyejo, Sanmi and Hashimoto, Tatsunori and Guestrin, Carlos and McCaleb, Jed and Choi, Yejin and Sun, Yu},
 journal = {arXiv preprint arXiv:2512.23675},
 title = {End-to-End Test-Time Training for Long Context},
 year = {2025}
}

@article{Zhang2025TTTRight,
 author = {Zhang, Tianyuan and Bi, Sai and Hong, Yicong and Zhang, Kai and Luan, Fujun and Yang, Songlin and Sunkavalli, Kalyan and Freeman, William T. and Tan, Hao},
 journal = {arXiv preprint arXiv:2505.23884},
 title = {Test-Time Training Done Right},
 year = {2025}
}

@article{Li2025TNT,
 author = {Li, Zeman and Behrouz, Ali and Deng, Yuan and Zhong, Peilin and Kacham, Praneeth and Karami, Mahdi and Razaviyayn, Meisam and Mirrokni, Vahab},
 journal = {arXiv preprint arXiv:2511.07343},
 title = {{TNT}: Improving Chunkwise Training for Test-Time Memorization},
 year = {2025}
}

@article{Han2025ViT3,
 author = {Han, Dongchen and Li, Yining and Li, Tianyu and Cao, Zixuan and Wang, Ziming and Song, Jun and Cheng, Yu and Zheng, Bo and Huang, Gao},
 journal = {arXiv preprint arXiv:2512.01643},
 title = {ViT$^3$: Unlocking Test-Time Training in Vision},
 year = {2025}
}

@article{Behrouz2025Atlas,
 author = {Behrouz, Ali and Li, Zeman and Kacham, Praneeth and Daliri, Majid and Deng, Yuan and Zhong, Peilin and Razaviyayn, Meisam and Mirrokni, Vahab},
 journal = {arXiv preprint arXiv:2505.23735},
 title = {{ATLAS}: Learning to Optimally Memorize the Context at Test Time},
 year = {2025}
}

@article{Behrouz2025Miras,
 author = {Behrouz, Ali and Razaviyayn, Meisam and Zhong, Peilin and Mirrokni, Vahab},
 journal = {arXiv preprint arXiv:2504.13173},
 title = {It's All Connected: A Journey Through Test-Time Memorization, Attentional Bias, Retention, and Online Optimization},
 year = {2025}
}

@article{Behrouz2025Nested,
 author = {Behrouz, Ali and Razaviyayn, Meisam and Zhong, Peilin and Mirrokni, Vahab},
 journal = {arXiv preprint arXiv:2512.24695},
 title = {Nested Learning: The Illusion of Deep Learning Architectures},
 year = {2025}
}

@article{Schmidhuber1992FastWeights,
 author = {Schmidhuber, J{\"u}rgen},
 journal = {Neural Computation},
 number = {1},
 pages = {131--139},
 title = {Learning to Control Fast-Weight Memories: An Alternative to Dynamic Recurrent Networks},
 volume = {4},
 year = {1992}
}

@inproceedings{Ba2016FastWeights,
 author = {Ba, Jimmy and Hinton, Geoffrey E. and Mnih, Volodymyr and Leibo, Joel Z. and Ionescu, Catalin},
 booktitle = {Advances in Neural Information Processing Systems},
 title = {Using Fast Weights to Attend to the Recent Past},
 volume = {29},
 year = {2016}
}

@inproceedings{Schlag2021FWP,
 author = {Schlag, Imanol and Irie, Kazuki and Schmidhuber, J{\"u}rgen},
 booktitle = {Proceedings of the 38th International Conference on Machine Learning},
 title = {Linear Transformers Are Secretly Fast Weight Programmers},
 year = {2021}
}

@article{Touvron2023Llama2,
 author = {Touvron, Hugo and Martin, Louis and Stone, Kevin and Albert, Peter and Almahairi, Amjad and Babaei, Yasmine and Bashlykov, Nikolay and Batra, Soumya and Bhargava, Prajjwal and Bhosale, Shruti and others},
 journal = {arXiv preprint arXiv:2307.09288},
 title = {Llama 2: Open Foundation and Fine-Tuned Chat Models},
 year = {2023}
}

@inproceedings{Gu2022S4,
 author = {Gu, Albert and Goel, Karan and R{\'e}, Christopher},
 booktitle = {International Conference on Learning Representations},
 title = {Efficiently Modeling Long Sequences with Structured State Spaces},
 year = {2022}
}

@article{Sun2023RetNet,
 author = {Sun, Yutao and Dong, Li and Huang, Shaohan and Ma, Shuming and Xia, Yuqing and Xue, Jilong and Wang, Jianyong and Wei, Furu},
 journal = {arXiv preprint arXiv:2307.08621},
 title = {Retentive Network: A Successor to Transformer for Large Language Models},
 year = {2023}
}

@inproceedings{Yang2024GLA,
 author = {Yang, Songlin and Wang, Bailin and Shen, Yikang and Panda, Rameswar and Kim, Yoon},
 booktitle = {International Conference on Machine Learning},
 title = {Gated Linear Attention Transformers with Hardware-Efficient Training},
 year = {2024}
}

@inproceedings{Qin2024HGRN2,
 author = {Qin, Zhen and Yang, Songlin and Sun, Weixuan and Shen, Xuyang and Li, Dong and Sun, Weigao and Zhong, Yiran},
 booktitle = {First Conference on Language Modeling},
 title = {{HGRN2}: Gated Linear {RNN}s with State Expansion},
 year = {2024}
}

@inproceedings{Qin2023HGRN,
 author = {Qin, Zhen and Yang, Songlin and Zhong, Yiran},
 booktitle = {Advances in Neural Information Processing Systems},
 title = {Hierarchically Gated Recurrent Neural Network for Sequence Modeling},
 year = {2023}
}

@inproceedings{Choromanski2021Performer,
 author = {Choromanski, Krzysztof and Likhosherstov, Valerii and Dohan, David and Song, Xingyou and Gane, Andreea and Sarlos, Tamas and Hawkins, Peter and Davis, Jared and Mohiuddin, Afroz and Kaiser, Lukasz and Belanger, David and Colwell, Lucy and Weller, Adrian},
 booktitle = {International Conference on Learning Representations},
 title = {Rethinking Attention with Performers},
 year = {2021}
}

@inproceedings{Zellers2019HellaSwag,
 author = {Zellers, Rowan and Holtzman, Ari and Bisk, Yonatan and Farhadi, Ali and Choi, Yejin},
 booktitle = {Proceedings of the 57th Annual Meeting of the Association for Computational Linguistics},
 title = {HellaSwag: Can a Machine Really Finish Your Sentence?},
 year = {2019}
}

@inproceedings{Bisk2020PIQA,
 author = {Bisk, Yonatan and Zellers, Rowan and Le Bras, Ronan and Gao, Jianfeng and Choi, Yejin},
 booktitle = {Proceedings of the AAAI Conference on Artificial Intelligence},
 title = {{PIQA}: Reasoning about Physical Commonsense in Natural Language},
 year = {2020}
}

@article{Clark2018ARC,
 author = {Clark, Peter and Cowhey, Isaac and Etzioni, Oren and Khot, Tushar and Sabharwal, Ashish and Schoenick, Carissa and Tafjord, Oyvind},
 journal = {arXiv preprint arXiv:1803.05457},
 title = {Think You Have Solved Question Answering? Try ARC, the AI2 Reasoning Challenge},
 year = {2018}
}

@inproceedings{Sakaguchi2020WinoGrande,
 author = {Sakaguchi, Keisuke and Le Bras, Ronan and Bhagavatula, Chandra and Choi, Yejin},
 booktitle = {Proceedings of the AAAI Conference on Artificial Intelligence},
 title = {{WinoGrande}: An Adversarial Winograd Schema Challenge at Scale},
 year = {2020}
}

@misc{eval-harness,
 author = {Sutawika, Lintang and Schoelkopf, Hailey and Gao, Leo and Abbasi, Baber and Biderman, Stella and Tow, Jonathan and fattori, ben and Lovering, Charles and farzanehnakhaee70 and Phang, Jason and Thite, Anish and Fazz and Aflah and Niklas and Wang, Thomas and sdtblck and nopperl and gakada and tttyuntian and researcher2 and Etxaniz, Julen and Chris and Lee, Hanwool Albert and Sinev, Leonid and Kasner, Zden{\v e}k and Stokes, Kiersten and Khalid and KonradSzafer and Hsu, Jeffrey and Kanekar, Anjor},
 title = {{EleutherAI/lm-evaluation-harness}: v0.4.9.1},
 publisher = {Zenodo},
 doi = {10.5281/zenodo.16737642},
 url = {https://zenodo.org/records/16737642},
 month = aug,
 year = {2025}
}

@inproceedings{Hsieh2024RULER,
 author = {Hsieh, Cheng-Ping and Sun, Simeng and Kriman, Samuel and Acharya, Shantanu and Rekesh, Dima and Jia, Fei and Ginsburg, Boris},
 title = {{RULER}: What's the Real Context Size of Your Long-Context Language Models?},
 booktitle = {First Conference on Language Modeling},
 year = {2024},
 url = {https://openreview.net/forum?id=kIoBbc76Sy}
}

@misc{yang2025flame,
 title  = {Flame: Flash Language Modeling Made Easy},
 author = {Zhang, Yu and Yang, Songlin},
 url    = {https://github.com/fla-org/flame},
 month  = jan,
 year   = {2025}
}

@article{Shazeer2020GLU,
 author = {Shazeer, Noam},
 title = {{GLU} Variants Improve Transformer},
 journal = {arXiv preprint arXiv:2002.05202},
 year = {2020}
}

@inproceedings{Krause2018Dynamic,
 author = {Krause, Ben and Kahembwe, Emmanuel and Murray, Iain and Renals, Steve},
 title = {Dynamic Evaluation of Neural Sequence Models},
 booktitle = {Proceedings of the 35th International Conference on Machine Learning},
 year = {2018}
}

@inproceedings{Irie2021RecurrentFWP,
 author = {Irie, Kazuki and Schlag, Imanol and Csord\'{a}s, R\'{o}bert and Schmidhuber, J\"{u}rgen},
 title = {Going Beyond Linear Transformers with Recurrent Fast Weight Programmers},
 booktitle = {Advances in Neural Information Processing Systems},
 volume = {34},
 year = {2021}
}

@inproceedings{Irie2022Dual,
 author = {Irie, Kazuki and Csord\'{a}s, R\'{o}bert and Schmidhuber, J\"{u}rgen},
 title = {The Dual Form of Neural Networks Revisited: Connecting Test Time Predictions to Training Patterns via Spotlights of Attention},
 booktitle = {Proceedings of the 39th International Conference on Machine Learning},
 year = {2022}
}

@inproceedings{Clark2019BoolQ,
 author = {Clark, Christopher and Lee, Kenton and Chang, Ming-Wei and Kwiatkowski, Tom and Collins, Michael and Toutanova, Kristina},
 title = {{BoolQ}: Exploring the Surprising Difficulty of Natural Yes/No Questions},
 booktitle = {Proceedings of the 2019 Conference of the North American Chapter of the Association for Computational Linguistics: Human Language Technologies},
 pages = {2622--2632},
 year = {2019}
}

@inproceedings{Mihaylov2018OBQA,
 author = {Mihaylov, Todor and Clark, Peter and Khot, Tushar and Sabharwal, Ashish},
 title = {Can a Suit of Armor Conduct Electricity? {A} New Dataset for Open Book Question Answering},
 booktitle = {Proceedings of the 2018 Conference on Empirical Methods in Natural Language Processing},
 pages = {2618--2628},
 year = {2018}
}

@inproceedings{Sap2019SocialIQA,
 author = {Sap, Maarten and Rashkin, Hannah and Chen, Derek and Le Bras, Ronan and Choi, Yejin},
 title = {Social {IQ}a: Commonsense Reasoning about Social Interactions},
 booktitle = {Proceedings of the 2019 Conference on Empirical Methods in Natural Language Processing and the 9th International Joint Conference on Natural Language Processing},
 pages = {4463--4473},
 year = {2019},
 doi = {10.18653/v1/D19-1454}
}

@inproceedings{Rajpurkar2016SQuAD,
 author = {Rajpurkar, Pranav and Zhang, Jian and Lopyrev, Konstantin and Liang, Percy},
 title = {{SQuAD}: 100{,}000+ Questions for Machine Comprehension of Text},
 booktitle = {Proceedings of the 2016 Conference on Empirical Methods in Natural Language Processing},
 pages = {2383--2392},
 year = {2016}
}

@inproceedings{Merity2017WikiText,
 author = {Merity, Stephen and Xiong, Caiming and Bradbury, James and Socher, Richard},
 title = {Pointer Sentinel Mixture Models},
 booktitle = {International Conference on Learning Representations},
 year = {2017}
}

@inproceedings{Paperno2016LAMBADA,
 author = {Paperno, Denis and Kruszewski, Germ\'{a}n and Lazaridou, Angeliki and Pham, Quan Ngoc and Bernardi, Raffaella and Pezzelle, Sandro and Baroni, Marco and Boleda, Gemma and Fern\'{a}ndez, Raquel},
 title = {The {LAMBADA} Dataset: Word Prediction Requiring a Broad Discourse Context},
 booktitle = {Proceedings of the 54th Annual Meeting of the Association for Computational Linguistics},
 pages = {1525--1534},
 year = {2016}
}

@inproceedings{Hao2011SWDE,
 author = {Hao, Qiang and Cai, Rui and Pang, Yanwei and Zhang, Lei},
 title = {From One Tree to a Forest: A Unified Solution for Structured Web Data Extraction},
 booktitle = {Proceedings of the 34th International ACM SIGIR Conference on Research and Development in Information Retrieval},
 pages = {775--784},
 year = {2011}
}

@inproceedings{Martin2018ParallelLRNN,
 author = {Martin, Eric and Cundy, Chris},
 title = {Parallelizing Linear Recurrent Neural Nets Over Sequence Length},
 booktitle = {International Conference on Learning Representations},
 year = {2018}
}

@inproceedings{Orvieto2023LRU,
 author = {Orvieto, Antonio and Smith, Samuel L. and Gu, Albert and Fernando, Anushan and Gulcehre, Caglar and Pascanu, Razvan and De, Soham},
 title = {Resurrecting Recurrent Neural Networks for Long Sequences},
 booktitle = {Proceedings of the 40th International Conference on Machine Learning},
 year = {2023}
}

@article{Hochreiter1997LSTM,
 author = {Hochreiter, Sepp and Schmidhuber, J{\"u}rgen},
 title = {Long Short-Term Memory},
 journal = {Neural Computation},
 volume = {9},
 number = {8},
 pages = {1735--1780},
 year = {1997}
}

@article{Chung2014GRU,
 author = {Chung, Junyoung and Gulcehre, Caglar and Cho, KyungHyun and Bengio, Yoshua},
 title = {Empirical Evaluation of Gated Recurrent Neural Networks on Sequence Modeling},
 journal = {arXiv preprint arXiv:1412.3555},
 year = {2014}
}

@inproceedings{Peng2023RWKV,
 author = {Peng, Bo and Alcaide, Eric and Anthony, Quentin and Albalak, Alon and Arcadinho, Samuel and Biderman, Stella and Cao, Huanqi and Cheng, Xin and Chung, Michael and Derczynski, Leon and Du, Xingjian and Grella, Matteo and Gv, Kranthi and He, Xuzheng and Hou, Haowen and Kazienko, Przemyslaw and Kocon, Jan and Kong, Jiaming and Koptyra, Bart{\l}omiej and Lau, Hayden and Lin, Jiaju and Mantri, Krishna Sri Ipsit and Mom, Ferdinand and Saito, Atsushi and Song, Guangyu and Tang, Xiangru and Wind, Johan and Wo{\'z}niak, Stanis{\l}aw and Zhang, Zhenyuan and Zhou, Qinghua and Zhu, Jian and Zhu, Rui-Jie},
 title = {{RWKV}: Reinventing {RNN}s for the Transformer Era},
 booktitle = {Findings of the Association for Computational Linguistics: EMNLP 2023},
 pages = {14048--14077},
 year = {2023},
 address = {Singapore},
 publisher = {Association for Computational Linguistics},
 doi = {10.18653/v1/2023.findings-emnlp.936},
 url = {https://aclanthology.org/2023.findings-emnlp.936/}
}

@article{Peng2025RWKV7,
 author = {Peng, Bo and Zhang, Ruichong and Goldstein, Daniel and Alcaide, Eric and Du, Xingjian and Hou, Haowen and Lin, Jiaju and Liu, Jiaxing and Lu, Janna and Merrill, William and Song, Guangyu and Tan, Kaifeng and Utpala, Saiteja and Wilce, Nathan and Wind, Johan S. and Wu, Tianyi and Wuttke, Daniel and Zhou-Zheng, Christian},
 title = {{RWKV-7} ``Goose'' with Expressive Dynamic State Evolution},
 journal = {arXiv preprint arXiv:2503.14456},
 year = {2025},
 doi = {10.48550/arXiv.2503.14456},
 url = {https://arxiv.org/abs/2503.14456}
}

@inproceedings{Kawaguchi2016NoPoorLocalMinima,
 author = {Kawaguchi, Kenji},
 title = {Deep Learning without Poor Local Minima},
 booktitle = {Advances in Neural Information Processing Systems},
 volume = {29},
 pages = {586--594},
 year = {2016}
}

@article{BaldiHornik1989PCA,
 author = {Baldi, Pierre and Hornik, Kurt},
 title = {Neural Networks and Principal Component Analysis: Learning from Examples without Local Minima},
 journal = {Neural Networks},
 volume = {2},
 number = {1},
 pages = {53--58},
 year = {1989},
 doi = {10.1016/0893-6080(89)90014-2}
}

@inproceedings{Arora2018ImplicitAcceleration,
 author = {Arora, Sanjeev and Cohen, Nadav and Hazan, Elad},
 title = {On the Optimization of Deep Networks: Implicit Acceleration by Overparameterization},
 booktitle = {Proceedings of the 35th International Conference on Machine Learning},
 volume = {80},
 series = {Proceedings of Machine Learning Research},
 pages = {244--253},
 year = {2018}
}

@inproceedings{Arora2019DeepMatrixFactorization,
 author = {Arora, Sanjeev and Cohen, Nadav and Hu, Wei and Luo, Yuping},
 title = {Implicit Regularization in Deep Matrix Factorization},
 booktitle = {Advances in Neural Information Processing Systems},
 volume = {32},
 pages = {7411--7422},
 year = {2019}
}

@misc{qin2024transnormerllm,
      title={TransNormerLLM: A Faster and Better Large Language Model with Improved TransNormer},
      author={Zhen Qin and Dong Li and Weigao Sun and Weixuan Sun and Xuyang Shen and Xiaodong Han and Yunshen Wei and Baohong Lv and Xiao Luo and Yu Qiao and Yiran Zhong},
      year={2024},
      eprint={2307.14995},
      archivePrefix={arXiv},
      primaryClass={cs.CL}
}

@inproceedings{
grazzi2024unlocking,
title={Unlocking State-Tracking in Linear {RNN}s Through Negative Eigenvalues},
author={Riccardo Grazzi and Julien Siems and J{\"o}rg K.H. Franke and Arber Zela and Frank Hutter and Massimiliano Pontil},
booktitle={NeurIPS 2024 Workshop on Mathematics of Modern Machine Learning},
year={2024},
url={https://openreview.net/forum?id=EXGAodFkQX}
}
